\title{\vspace{-1pt}EfficientFlow: Efficient Equivariant Flow Policy Learning for Embodied AI}
\author[*]{Jianlei Chang}
\author[*]{Ruofeng Mei}
\author{Wei Ke}
\author[\dagger]{Xiangyu Xu}
\affiliation{Xi'an Jiaotong University}
\abstract{
\vspace{-5pt}
Generative modeling has recently shown remarkable promise for visuomotor policy learning, enabling flexible and expressive control across diverse embodied AI tasks. 
However, existing generative policies often struggle with \emph{data inefficiency}, requiring large-scale demonstrations, and \emph{sampling inefficiency}, incurring slow action generation during inference.
We introduce EfficientFlow, a unified framework for efficient embodied AI with flow-based policy learning. 
To enhance data efficiency, we bring equivariance into flow matching. We theoretically prove that when using an isotropic Gaussian prior and an equivariant velocity prediction network, the resulting action distribution remains equivariant, leading to improved generalization and substantially reduced data demands.
To accelerate sampling, we propose a novel acceleration regularization strategy. As direct computation of acceleration is intractable for marginal flow trajectories, we derive a novel surrogate loss that enables stable and scalable training using only conditional trajectories.
Across a wide range of robotic manipulation benchmarks, the proposed algorithm achieves competitive or superior performance under limited data while offering dramatically faster inference. These results highlight EfficientFlow as a powerful and efficient paradigm for high-performance embodied AI.
\vspace{-10pt}
}
\newtheorem{theorem}{Theorem}
\newtheorem{lemma}{Lemma}
\theoremstyle{definition}
\theoremstyle{remark}
\definecolor{citecolor}{HTML}{0071BC}
\definecolor{linkcolor}{HTML}{ED1C24}
\definecolor{acceptcolor}{HTML}{74C219}
\definecolor{rejectcolor}{HTML}{DE1616}
\definecolor{qcolor}{HTML}{536872}
\definecolor{demphcolor}{RGB}{100,100,100}
\definecolor{brightlavender}{rgb}{0.75, 0.58, 0.89}
\definecolor{palered}{rgb}{1.00, 0.70, 0.70}
\definecolor{palegreen}{rgb}{0.73, 0.96, 0.67}
\definecolor{paleblue}{rgb}{0.69, 0.84, 1.00}
\definecolor{paleorange}{rgb}{1.00, 0.86, 0.73}
\definecolor{palepurple}{rgb}{0.92, 0.85, 1.00}
\definecolor{paleyellow}{rgb}{1.00, 1.00, 0.50}
\newlength\savewidth
\renewcommand{\paragraph}[1]{\vspace{1.25mm}\noindent\textbf{#1}}
\newcolumntype{L}[1]{>{\raggedright\let\newline\\\arraybackslash\hspace{0pt}}m{#1}}
\newcommand{\app}{\raise.17ex\hbox{$\scriptstyle\sim$}}
\definecolor{lightgray}{rgb}{0.95, 0.95, 0.95}
\definecolor{baselinecolor}{gray}{.9}
\setlist[enumerate]{itemsep=-0.5mm,partopsep=0pt}
\renewcommand{\paragraph}[1]{\vspace{1.25mm}\noindent\textbf{#1}}
\newcolumntype{x}[1]{>{\centering\arraybackslash}p{#1pt}}
\newcolumntype{y}[1]{>{\raggedright\arraybackslash}p{#1pt}}
\newcolumntype{z}[1]{>{\raggedleft\arraybackslash}p{#1pt}}
\pgfplotsset{compat=1.18}
\begin{document}

\maketitle

\section{Introduction}

Learning robotic policies from data using generative models has emerged as a powerful and flexible paradigm in embodied AI, particularly with the recent success of diffusion-based approaches~\citep{chi2023diffusion, Ze2024DP3}. 
These models have demonstrated strong performance in visuomotor control by learning complex action distributions conditioned on high-dimensional observations. 
However, two key limitations remain: low data efficiency, requiring large amounts of training data, and low sampling efficiency, incurring high computational cost at inference due to the iterative sampling process.

Recent works have sought to address the data efficiency issue by incorporating equivariance into diffusion models for policy learning~\citep{wang2024equivariant}. 
By leveraging the inherent symmetries of the environment (e.g., 2D rotation), these methods introduce strong inductive biases that enable policies to generalize across symmetric configurations. 
Nevertheless, as they are still built upon diffusion models, which typically require hundreds of iterative denoising steps to generate a single action~\citep{sohl2015deep,ho2020denoising}, they remain impractical for real-time robotic control.
To overcome this limitation, we turn to Flow Matching~\citep{lipman2022flow}, a recent class of generative models that learns a continuous trajectory from a simple prior distribution to the data distribution using an ordinary differential equation (ODE) defined by a velocity field. 
Compared to diffusion models, flow-based approaches offer better numerical stability and faster inference, making them highly appealing for efficient embodied AI.

We present EfficientFlow, a new policy learning framework that unifies equivariant learning and flow-based generative modeling.
We first investigate how to incorporate equivariance into flow-based policy models and theoretically show that, under an isotropic Gaussian prior and an equivariant velocity field network, the conditional action distribution induced by flow matching remains equivariant with respect to input observation transformations (see Figure~\ref{fig1}(a)). 
This property allows policies to generalize across symmetric configurations of the environment without additional supervision or data augmentation.

To further improve the action sampling efficiency, we introduce a regularization technique that penalizes the acceleration of the generation flow trajectory, i.e., the second-order temporal derivative, which encourages a smoother and more stable action sampling process.

However, computing acceleration requires consecutive points along the marginal flow trajectories, which are unavailable in the standard flow matching framework.
To address this challenge, we propose a novel surrogate objective called Flow Acceleration Upper Bound (FABO). 
FABO provides a practical and effective approximation of the acceleration penalty using only conditional flow trajectories available during training, enabling much faster flow policies with lower computational costs.

The proposed EfficientFlow combines the best of both worlds: it achieves fast inference speed thanks to the flow-based architecture and smoothed sampling trajectory, and maintains high performance by leveraging equivariance. 
As illustrated in Figure~\ref{fig1}(b), EfficientFlow compares favorably against existing methods in both inference speed and task success rates.

Our primary contributions are as follows:
\begin{itemize}
\item We formulate a flow-based policy learning framework, EfficientFlow, that achieves equivariance to geometric transformations, allowing the model to generalize across symmetric states and significantly improve data efficiency. We provide a theoretical analysis showing that equivariance is preserved in the flow framework when using an isotropic prior and an equivariant velocity field conditioned on visual observations.

\item To promote sampling speed, we propose a second-order regularization objective that penalizes flow acceleration. Since direct acceleration computation requires access to neighboring marginal samples that are unavailable, we introduce a novel surrogate loss called FABO, enabling effective training.

\item We provide comprehensive evaluations of EfficientFlow on 12 robotic manipulation tasks in the \mbox{MimicGen}~\citep{mandlekar2023mimicgen} benchmark, showing that EfficientFlow achieves favorable success rates with high inference speeds (19.9 to 56.1 times faster than EquiDiff~\citep{wang2024equivariant}).
\end{itemize}

\begin{figure*}[t]
\centering
\includegraphics[width=1\linewidth]{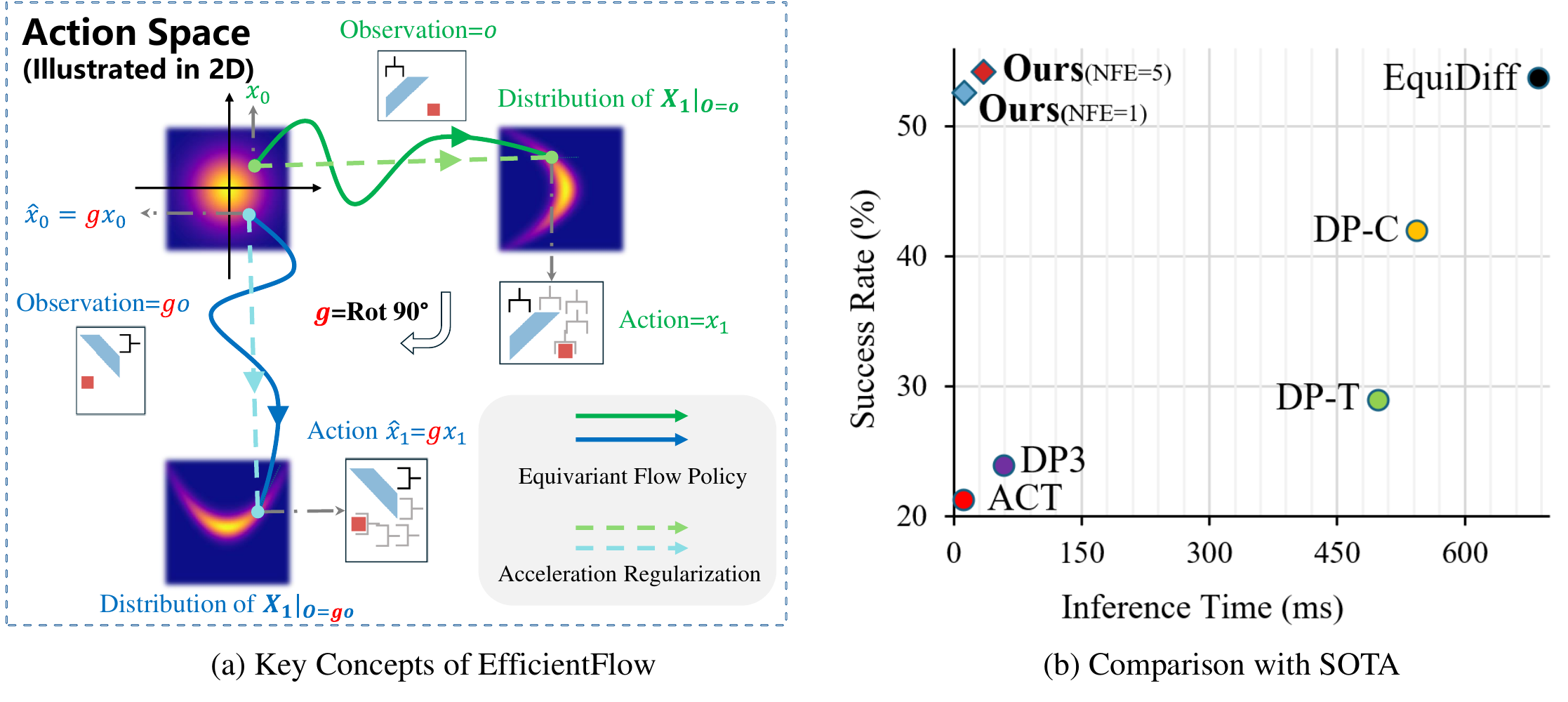} 
\caption{We propose EfficientFlow (a) to effectively combine equivariance with Flow Policy and introduce an acceleration regularization to achieve high-quality, fast action generation. As shown in (b), EfficientFlow compares favorably against baseline policy learning approaches in both success rate and inference speed. Results are from MimicGen with 100 training demonstrations.}
\label{fig1}
\end{figure*}

\section{Related Work}

\subsection{Equivariance in Robot Manipulation}

Applying equivariance to robot manipulation is a highly promising research direction, and multiple studies have demonstrated that it can significantly enhance the data efficiency of robot policy learning~\citep{wangmathrm,jia2023seil,wang2022equivariant,simeonov2023se,pan2023tax,huang2023edge,liu2023continual,kim2023se,nguyen2023equivariant,yang2024equibot}. Early work used SE(3) open-loop or SE(2) closed-loop for control, validated the effectiveness of equivariant models in on-robot learning~\citep{zhu2022sample,wangrobot,zhu2023robot}, and achieved pick-and-place tasks based on few-shot demonstrations~\citep{huang2022equivariant,simeonov2022neural,ryuequivariant,huang2024fourier}.
Building on this foundation, EquiDiff~\citep{wang2024equivariant} extend the research to the SE(3) closed-loop action space, substantially improving the efficiency of imitation learning by integrating symmetry with diffusion policies. However, the DDPM architecture employed by EquiDiff requires a multi-step denoising process, resulting in slow inference speeds. In contrast, the EfficientFlow model marks a significant breakthrough in inference efficiency, attaining higher success rates than EquiDiff with only a minimal number of inference steps.

\subsection{Flow Policy}

Flow Matching~\citep{lipman2022flow} represents a novel class of generative models grounded in optimal transport theory. Its objective is to learn a vector field of a probability path, which is more efficient than diffusion paths, offering faster training and sampling, alongside better generalization capabilities. Compared to diffusion models, Flow Matching significantly reduces the number of inference steps, a critical factor for real-world robotic operations, thereby substantially broadening the applicability of such models. The work Flow Policy~\citep{zhang2025flowpolicy} introduced conditional Consistent Flow Matching~\citep{yang2024consistency} to robotic manipulation. Conditioned on observed 3D point clouds, Flow Policy utilizes Consistency Flow Matching to directly define straight-line flows from different temporal states to the same action space, concurrently constraining their velocity values. It approximates trajectories from noise to robot actions by normalizing the self-consistency of the velocity field within the action space, thereby enhancing inference efficiency. MP1~\citep{sheng2025mp1} leverages Mean Flow~\citep{geng2025mean} to shrink policy learning to a single state-action step, while a lightweight Dispersive Loss repels state embeddings. This combination steadies the flow field and delivers millisecond inference that outpaces DP3 and Flow Policy. Currently, many VLA (Vision-Language-Action) models~\citep{black2024pi_0,gao2025vita,bjorck2025gr00t,reussflower} are utilizing flow matching policies and have achieved good results.

\section{Method}
\subsection{Preliminaries}
\subsubsection{Flow Matching}
The core idea of Flow Matching~\citep{lipman2022flow} is to learn the vector field of an ODE that smoothly transforms samples $x_0$ from a simple prior distribution $p_0$ (e.g., Gaussian noise) to samples $x_1$ from a target data distribution $p_1$. 

Specifically, let \(\{p_t\}_{t \in [0,1]}\) be a time-evolving family of probability distributions satisfying the boundary conditions \(p_{t=0}=p_0\) and \(p_{t=1}=p_1\). This path induces an underlying ground-truth instantaneous velocity field \(u^{\text{gt}}(t, x)\).
Flow Matching aims to learn a vector field $u_\theta(t,x)$ parameterized by $\theta$, such that trajectories $x_t$ defined by the following ODE:
\begin{equation}\label{eq:ode}
\begin{cases}
\frac{dx_t}{dt} = u_\theta(t,x_t) \\
x_0\sim p_0
\end{cases}
\end{equation}
can effectively transport the prior distribution \(p_0\) to the target distribution \(p_1\). 
Ideally, the learned vector field $u_\theta(t,x)$ should approximate the true vector field $u^{\text{gt}}(t,x)$. 
Thus, a natural learning objective for Flow Matching is:
\begin{equation}\label{eq:fm}
L_{\text{FM}} = \mathbb{E}_{t,x_t} \left[ \left\| u_\theta(t,x_t) - u^{\text{gt}}(t,x_t) \right\|_2^2 \right],
\end{equation}
where $x_t \sim p_t$. 

As $u^{\text{gt}}(t,x_t)$ is generally intractable in practice, Conditional Flow Matching (CFM)~\citep{lipman2022flow} proposes to learn $u_\theta(t,x_t)$ by regressing against a conditional vector field $u(t,x_t|x_1)$, using samples from a conditional probability path $p_t(x|x_1)$.
The corresponding objective is:
\begin{equation}\label{eq:CFM}
L_{\text{CFM}} = \mathbb{E}_{t,x_1,x_t} \left[ \left\| u_\theta(t,x_t) - u(t,x_t|x_1) \right\|_2^2 \right],
\end{equation}
where $t\sim U(0,1),x_1 \sim p_1(x)$, and $x_t\sim p_t(x|x_1)$.

\subsubsection{Equivariance}
Equivariance is a desirable property in many learning systems, especially when modeling structured data influenced by known symmetries~\citep{cesa2022program}. 
A function \( f \) is said to be {equivariant} with respect to a transformation group \( G \) if it commutes with the actions of the group. 
Formally, this is expressed as:
\begin{equation}\label{eq:equiv}
    f(\rho_x(g)x) = \rho_y(g)f(x), \quad \forall g \in G,
\end{equation}
where \( \rho_x \) and \( \rho_y \) denote group representations that describe how the group acts on the input space and output space, respectively. This equation ensures that applying a group transformation to the input and then evaluating the function (Eq.~\ref{eq:equiv} left) yields the same result as first applying the function and then transforming the output (Eq.~\ref{eq:equiv} right).

In this work, we focus on learning equivariant policies for robot arm control, where the input \( x \) represents the robot arm action in the task space. 
Since robotic manipulation tasks often exhibit rotational symmetry~\citep{wang2024equivariant,wangmathrm}, we study the action of the rotation group \( \mathrm{SO}(2) \) and its finite cyclic subgroup \( C_u \subset \text{SO}(2) \), which models discrete rotational symmetries (e.g., rotations by \( \frac{2\pi}{u} \) radians).

We consider the following standard representations:
1) the trivial representation \( \rho_0 \), which maps every group element \( g \in G \) to the identity transformation. This is typically used when the function output should remain invariant under the group action.
2) the {standard irreducible representation} \( \rho_1 \), which describes the canonical action of \( \mathrm{SO}(2) \) or \( C_u \) on the 2D plane, defined as \(
\rho_1(g) = 
\begin{bmatrix}
\cos(g) & -\sin(g) \\
\sin(g) & \cos(g)
\end{bmatrix},
\)
where we slightly abuse the notation to use $g$ to denote both a group element and its corresponding rotation angle.

By designing policy networks that are equivariant under these group actions, we aim to incorporate inductive biases that reflect the underlying symmetries of the robot's action space. This not only improves sample efficiency but also enhances generalization across real task configurations.

\begin{figure*}[ht]
  \centering
  \includegraphics[width=0.95\textwidth]{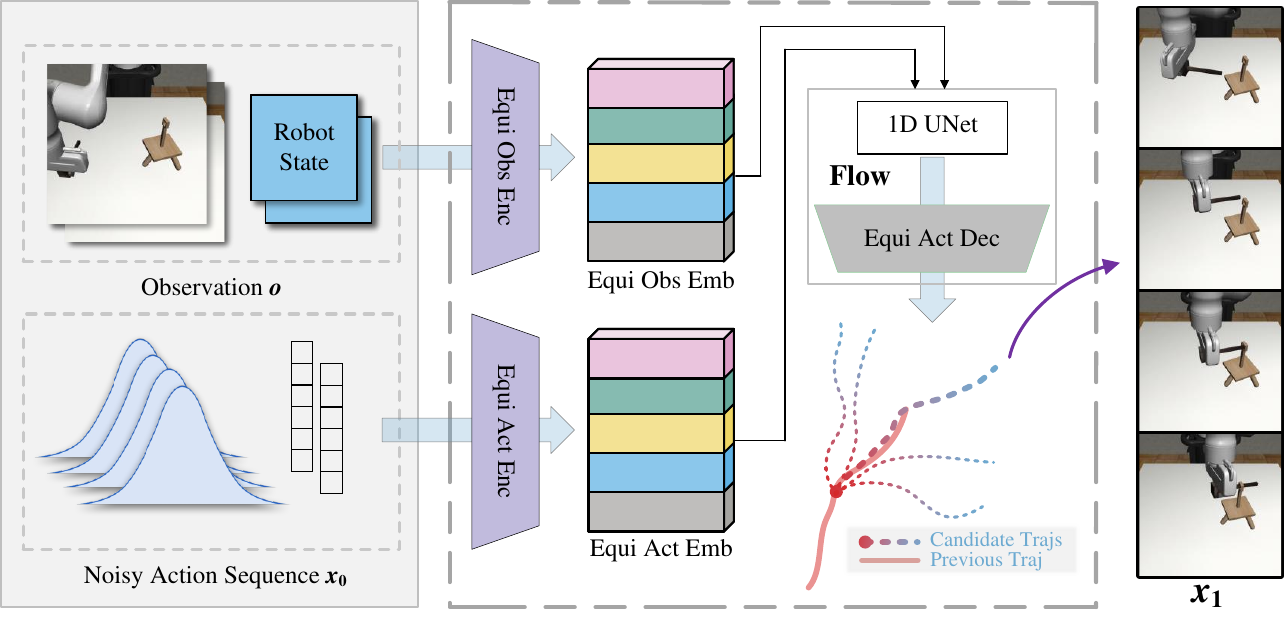}
  \caption{Overview of EfficientFlow. At each decision step, the policy utilizes the most recent two observation steps $\textbf{o}$ as input. This information is processed by the equivariant Flow Matching network to generate five candidate action trajectories. The trajectory that exhibits the minimum Euclidean distance to the previously predicted trajectory is then selected for execution, ensuring a smooth and coherent action sequence.}
  \label{image-main}
\end{figure*}

\subsection{Equivariant Flow Policy}
Generative models for policy learning have received significant attention in recent years. Given an observation \( o \), such models can predict a conditional distribution \( p_{X_1 | O = o} \), and generate robot actions by sampling $x \sim X_1 \big|_{O=o}$, where \( X_1 \) represents the random variable for the action to be executed by the robot arm under the condition that $O=o$.

Both $o$ and $x$ can span multiple time steps: $o=[o^{\tau-(m-1)}, \cdots, o^{\tau-1}, o^\tau]$, $x=[x^\tau, x^{\tau+1}, \cdots, x^{\tau+(n-1)}]$, where $m$ is the number of historical observations, and $n$ is the number of future action steps.
The observation $o^\tau$ includes both the image and the robot state at robot time $\tau$.

A desirable property for such models is {equivariance}: when the input \( o \) is transformed by an element \( g \in G \) of a symmetry group (e.g., a rotation), the conditional distribution of the output action should transform accordingly. In other words, symmetry in the observation space should induce symmetry in the action space:
\begin{align}\label{eq:desired property}
    X_1 \big|_{O = go} \stackrel{d}{=} g \left( X_1 \big|_{O = o} \right),
\end{align}
where $\stackrel{d}{=}$ denotes that the two random variables have the same distribution. We leave the group representation $\rho(g)$ implicit here and directly use $g$ for brevity.

\subsubsection{How to Make Flow Policy Equivariant?}
The main contribution of this work is to demonstrate that the desired property in Eq.~\ref{eq:desired property} can be achieved within the Flow Matching framework by: 
\begin{enumerate}
    \item using an isotropic distribution for \( p_0 \) in Eq.~\ref{eq:ode}, e.g., Gaussian noise \( X_0 \sim \mathcal{N}(0, I) \);
    \item using an equivariant network \( u_\theta \) for the velocity field such that:
\begin{align}\label{eq:uequiv}
    u_\theta(t, g x | g o) = g \left( u_\theta(t, x | o) \right), \quad \forall g \in G.
\end{align}
\end{enumerate}
Importantly, we do \emph{not} impose the strong assumption that the expert policy in the training data be equivariant, which is in sharp contrast with \citep{wang2024equivariant}.

\begin{theorem}\label{th:equi}
Let \( G \) be a transformation group acting on both the observation space and the action space. Suppose the initial distribution \( p_0 \) is isotropic, i.e., \( p_0(g x) = p_0(x) \) for all \( g \in G \), and the velocity network \( u_\theta(t, x | o) \) is equivariant as in Eq.~\ref{eq:uequiv}.
Then the induced conditional distribution at time \( t \), given by the flow ODE Eq.~\ref{eq:ode}, satisfies
\begin{align}\label{eq:theorem}
    X_t \big|_{O = go} \stackrel{d}{=} g \left( X_t \big|_{O = o} \right), \quad t\in [0,1]
\end{align}
i.e., the output distribution is equivariant under the group action.
\end{theorem}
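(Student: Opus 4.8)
The plan is to prove the statement first at the level of individual ODE trajectories and then lift it to distributions using the isotropy of the prior. First I would introduce the flow map $\Phi^o_t$ that sends an initial point $x_0$ to the solution $x_t$ of the ODE in \equref{eq:ode} conditioned on $o$, so that $X_t\big|_{O=o} = \Phi^o_t(X_0)$ with $X_0 \sim p_0$. Working with this map decouples the deterministic transport (governed by $u_\theta$) from the randomness (carried entirely by $X_0$), which is exactly the separation that the two hypotheses address.

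The central step is a pathwise \emph{intertwining} identity: for every fixed $x_0$ and every $g \in G$,
\begin{equation}\label{eq:intertwine}
\Phi^{go}_t(g x_0) = g\,\Phi^o_t(x_0), \qquad t \in [0,1].
\end{equation}
To establish this I would set $z_t := g\,\Phi^o_t(x_0)$ and verify that $z_t$ solves the same initial value problem as the left-hand side. Since the group acts linearly (through a constant representation matrix), $g$ commutes with $d/dt$, and using the equivariance of the velocity field \equref{eq:uequiv} I get
\begin{equation}
\frac{dz_t}{dt} = g\,u_\theta\big(t, \Phi^o_t(x_0)\mid o\big) = u_\theta\big(t, g\,\Phi^o_t(x_0)\mid go\big) = u_\theta(t, z_t \mid go),
\end{equation}
together with the initial condition $z_0 = g x_0$. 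Because $\Phi^{go}_t(g x_0)$ solves precisely this problem, uniqueness of solutions to the ODE forces the two trajectories to coincide, giving \equref{eq:intertwine}.

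Finally I would lift this to distributions. Rewriting \equref{eq:intertwine} as the conjugacy relation $\Phi^{go}_t = g \circ \Phi^o_t \circ g^{-1}$ and applying it to the random initial condition yields
\begin{equation}
X_t\big|_{O=go} = \Phi^{go}_t(X_0) = g\,\Phi^o_t\big(g^{-1} X_0\big).
\end{equation}
Because $p_0$ is isotropic and the representation acting on the action space is orthogonal, the pushforward of $p_0$ under $g^{-1}$ is again $p_0$, i.e.\ $g^{-1} X_0 \stackrel{d}{=} X_0$. Substituting this gives $X_t\big|_{O=go} \stackrel{d}{=} g\,\Phi^o_t(X_0) = g\big(X_t\big|_{O=o}\big)$, which is exactly \equref{eq:theorem}.

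The main obstacle is the pathwise intertwining step \equref{eq:intertwine}: it hinges on combining the linearity of the group action (so that $g$ passes through the time derivative) with the equivariance assumption in precisely the right order, and on invoking uniqueness of ODE solutions, which requires a mild regularity (Lipschitz) assumption on $u_\theta$ in its spatial argument. A secondary subtlety worth checking is that invariance of $p_0$ under $g^{-1}$ relies on the representation being orthogonal (unit Jacobian determinant in the change-of-variables for the density); this holds for the trivial and standard rotation representations $\rho_0,\rho_1$ considered here, so the isotropy of the Gaussian prior is genuinely preserved under the action.
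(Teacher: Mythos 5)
Your proposal is correct and follows essentially the same route as the paper's proof: establish pathwise equivariance of the flow map via uniqueness of ODE solutions, then transfer it to distributions using isotropy of $p_0$ (the paper pushes forward $gX_0$ while you pull back with $g^{-1}$, which is an equivalent final step). Your added remarks on Lipschitz regularity and orthogonality of the representation are sensible refinements but do not change the argument.
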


The special case $t=1$ of Eq.~\ref{eq:theorem} gives us the desired property in Eq.~\ref{eq:desired property}.
An intuitive visualization of this result is provided in Figure~\ref{fig1}(a).
From a discrete-time perspective, consider starting from a randomly sampled initial action $x_0 \sim p_0$. After a small time step $\Delta t$, the action evolves under the velocity field to reach $x_{\Delta t} = x_0 + \Delta t \cdot u_\theta(0, x_0 | o) $.
This corresponds to the green curve in Figure~\ref{fig1}(a).

Now, consider a rotated scenario where the initial action is transformed to $\hat{x}_0 = g x_0$, and the observation is rotated accordingly to $go$. The updated action becomes
\begin{align}
\hat{x}_{\Delta t} 
=&\ \hat{x}_0 + \Delta t \cdot u_\theta(0, \hat{x}_0 | go) 
&& \text{(one-step update)} \nonumber\\
=&\ g x_0 + \Delta t \cdot u_\theta(0, g x_0 | go) 
&& \text{(since $\hat{x}_0 = g x_0$)} \nonumber\\
=&\ g x_0 + \Delta t \cdot g u_\theta(0, x_0 | o) 
&& \text{(Eq.~\ref{eq:uequiv})} \nonumber\\
=&\ g x_{\Delta t}.  \nonumber
\end{align}
This corresponds to the blue curve in Figure~\ref{fig1}(a), showing that the evolution of the rotated action $\hat{x}_t$ under the rotated observation $go$ aligns with the rotated evolution of the original action $x_t$.

By repeating this process over the entire flow trajectory, we conclude that $\hat{x}_1 = g x_1$ (see Figure~\ref{fig1}(a)).
Since $p_0$ is isotropic, $x_0$ and $\hat{x}_0$ have the same probability density. 
Given that the flow deterministically transports $x_0$ to $x_1$ and $\hat{x}_0$ to $\hat{x}_1$, it follows that $x_1$ and $\hat{x}_1$ share the same density.
This implies that the resulting distribution of $x_1$ and $\hat{x}_1$ respects the desired equivariance with respect to $g$.
We emphasize that this is only an intuitive explanation; a rigorous proof is provided in Appendix~\ref{sec:theorem1}. 

Since standard Flow Matching uses a Gaussian distribution as $p_0$ by default, the isotropy condition in Theorem~\ref{th:equi} is automatically satisfied. 
As a result, making Flow Policy equivariant reduces to designing an equivariant network $u_\theta$.

\subsubsection{Design of the Equivariant \texorpdfstring{$u_\theta$}{u-theta}} 

To implement the equivariant policy network $u_\theta$, we leverage the \texttt{escnn} library~\citep{cesa2022program}, which supports constructing neural networks that are equivariant to symmetry groups (planar rotations modeled by SO(2) in our case). 
A critical step in using \texttt{escnn} is specifying how each output component transforms under group actions, which requires carefully choosing representations that respect the underlying task symmetries.

In our setting, the policy outputs an absolute 6-DoF end-effector pose with 3D rotation and 3D translation, along with a scalar gripper width to control a robot arm.
To represent the 3D rotation, we adopt the 6D continuous representation~\citep{zhou2019continuity} that encodes the first two rows of a $3 \times 3$ rotation matrix, corresponding to the $x$ and $y$ axes. 
This 6D representation can be seen as three 2D vectors in the $x$-$y$ plane, which transform under SO(2) according to the irreducible representation $\rho_1$. Therefore, the 3D rotation component corresponds to $\rho_1^3$.

For 3D translation, the $x$ and $y$ components transform as a 2D vector under SO(2), again corresponding to $\rho_1$, while the $z$ component remains invariant and is modeled as $\rho_0$. 
The scalar gripper width is also invariant under planar rotation, corresponding to another $\rho_0$.

Combining these components, the action vector at robot time $\tau$, denoted by $x^\tau$, is a 10D vector comprising a 6D rotation representation (first 6 dimensions), a 3D translation vector (next 3 dimensions), and a scalar gripper width (final dimension). The corresponding equivariant representation of the action output is:
\[
g x^\tau = (\rho_1^3 \oplus (\rho_1 \oplus \rho_0) \oplus \rho_0)(g)  x^\tau.
\]
This representation enables $u_\theta$ to produce actions that respect the SO(2) symmetry of the task, ensuring consistent behavior under planar rotations of the scene.

\subsubsection{Network Architecture}
As introduced above, the input of $u_\theta$ is flow time $t$, action sequence $x_t$, and observation $o$.
We set the equivariant group as a finite cyclic subgroup $C_u \in \text{SO}(2)$, and $u$ is the order of the group.
We first use an equivariant observation encoder to map observation $o$ to embeddings $e_o \in \mathbb{R}^{u \times d_o}$ and use an equivariant action encoder to map action sequence $x_t$ to embeddings $e_x \in \mathbb{R}^{u \times d_x}$, where $d_o$ and $d_x$ are the feature dimensions associated with each group element.

The encoded embeddings $e_o, e_x$, along with the timestep $t$, are fed into a core equivariant neural network. This network, together with the observation and action encoders, parameterizes the conditional vector field $u_\theta(t, x_t, o)$. As all components are designed to be equivariant, the entire mapping process from raw inputs to the predicted vector field strictly adheres to $C_u$ symmetry.

\subsubsection{Temporal Consistency}
\label{sec:temporal_consistency}
When generating action sequences, adjacent segments are predicted independently. As a result, the policy may switch between different behavioral modes, leading to inconsistencies in long-term execution.

To address this, we adopt a temporal overlapping strategy similar to \citep{chi2023diffusion}: only the first $n_1$ steps of each predicted sequence are executed, while the remaining $n - n_1$ steps overlap with the subsequent prediction starting from time $\tau + n_1$. 
Long-term consistency can be achieved by generating neighboring action sequences with similar overlap.

To this end, we employ a batched trajectory selection and periodic reset strategy, inspired by IMLE Policy~\citep{rana2025imle}, which balances multi-modal expressivity with temporal coherence.
During inference, we sample $m$ initial noise vectors $\{x_{0,i}\}_{i=1}^m$ from a Gaussian distribution and evolve each through our model to generate $m$ candidate action trajectories $\{x_{1,i}\}_{i=1}^m$. We then select the trajectory whose overlapping segment best matches the previous trajectory in the Euclidean sense:
\begin{align}
\arg\min_{i \in {1,\dots,m}} d\left([x_{\text{pre}}^{\tau+n_1}, \dots, x_{\text{pre}}^{\tau+n}], [x_{1,i}^{\tau+n_1}, \dots, x_{1,i}^{\tau+n}]\right), \nonumber
\end{align}
where we assume the current robot time is $\tau+n_1$, and $x_{\text{pre}}$ denotes the previous action sequence predicted at time $\tau$, where the steps $x_{\text{pre}}^{\tau}, \dots, x_{\text{pre}}^{\tau+n_1-1}$ have already been executed.

To preserve the model’s ability to explore diverse behaviors, we introduce periodic resets: every 10 prediction cycles, we randomly select one trajectory from the batch for execution, instead of the one that minimizes the overlap distance.
This approach improves temporal consistency while maintaining multi-modality, and the batched design ensures minimal overhead in inference time due to parallelization.

\subsection{Acceleration Regularization}
\label{sec:EfficientFlow}
In our experiments, we observe that flow-based policies trained solely with the conditional flow matching objective (Eq.~\ref{eq:CFM}) tend to perform poorly when the number of function evaluations (NFE) is low. This suggests that the learned flow fields are overly curved, requiring more integration steps for accurate trajectory generation.

To address this, we propose an acceleration regularization term that encourages smoother, low-curvature flow trajectories. The underlying intuition is that smoother motion corresponds to smaller second-order derivatives (accelerations) of the trajectory $x_t$. In the extreme case of zero acceleration, the trajectory becomes a straight line.

We augment the training objective as follows:
\begin{equation}
\label{eq:Acc}
\underbrace{\mathbb{E} \left[ \left\| u_\theta(t,x_t) - u(t,x_t \mid x_1) \right\|_2^2 \right]}_{\text{Data Term}} + \lambda\!\!\!   \underbrace{\mathbb{E} \left[ \left\| \frac{\mathrm{d}^2x_t}{\mathrm{d}t^2} \right\|_2^2 \right]}_{\text{Acceleration Penalty}},
\end{equation}
where $\lambda$ controls the trade-off between fidelity to the target velocity field and trajectory smoothness. In practice, we use a time-dependent weighting $\lambda(t) = (1 - t)^2$, which encourages smoother flow at earlier timesteps and prioritizes accuracy as $t \to 1$.

According to Eq.~\ref{eq:ode}, the second derivative term can be rewritten as:
\begin{align}
\mathbb{E} \left[ \left\| \frac{\mathrm{d}^2x_t}{\mathrm{d}t^2} \right\|_2^2 \right]
\approx \frac{1}{(\Delta t)^2} \mathbb{E} \left\| u_\theta(t, x_t) - u_\theta(t + \Delta t, x_{t + \Delta t}) \right\|_2^2, \nonumber
\end{align}
which, however, cannot be directly evaluated, because $x_t$ and $x_{t+\Delta t}$ lie on the same underlying marginal trajectory that is unknown.

To overcome this, we introduce a practical surrogate regularization, which we call the Flow Acceleration Upper Bound (FABO):
\begin{align}
\label{eq:fabo}
\text{FABO} = \mathbb{E} \left\| u_\theta(t, \tilde{x}_t) - u_\theta(t + \Delta t, \tilde{x}_{t + \Delta t}) \right\|_2^2 
\geq  \mathbb{E} \left\| u_\theta(t, x_t) - u_\theta(t + \Delta t, x_{t + \Delta t}) \right\|_2^2, 
\end{align}
when $\Delta t$ is small.
Notably, $\tilde{x}_t$ and $\tilde{x}_{t + \Delta t}$ are sampled from the same conditional trajectory at time $t$ and $t + \Delta t$, which are easy to draw and require no knowledge of the marginal trajectory.

In essence, FABO minimizes an upper bound on the true acceleration penalty, serving as a tractable and effective proxy. A formal proof for Eq.~\ref{eq:fabo} is provided in Appendix~\ref{sec:fabo}. 

\begin{table*}[t] 
\centering 
\setlength{\tabcolsep}{1mm} 
\sisetup{round-mode=places,round-precision=0,table-align-text-post=false} 
\begin{center}
\small
\begin{tabular}{llc 
                *{18}{S[table-format=2,table-number-alignment = center]} 
                 }
\toprule
\multirow{2}{*}{Method}& \multirow{2}{*}{Obs} & \multirow{2}{*}{NFE} & \multicolumn{3}{c}{\textbf{Stack D1}}  & \multicolumn{3}{c}{\textbf{Square D2}} & \multicolumn{3}{c}{\textbf{Threading D2}}& \multicolumn{3}{c}{\textbf{Stack Three D1}} & \multicolumn{3}{c}{\textbf{Coffee D2}} & \multicolumn{3}{c}{\textbf{3 Pc. Asm. D2}} \\
\cmidrule(lr){4-6} \cmidrule(lr){7-9} \cmidrule(lr){10-12} \cmidrule(lr){13-15} \cmidrule(lr){16-18} \cmidrule(lr){19-21}
 &  & & {100} & {200} & {1000}& {100} & {200} & {1000}& {100} & {200} & {1000}& {100} & {200} & {1000}& {100} & {200} & {1000}& {100} & {200} & {1000}\\ 
\midrule
\multirow{3}{*}{Ours} & \multirow{3}{*}{RGB} & 1    & \textbf{94}   & \textbf{100}  & \textbf{100}    & 21   & \textbf{45}   & 67   & \textbf{31}   & 36   & 49   & 48   & 73   & 92 & 65   & \textbf{81}   & 79   & 11   & 35   & 60   \\
& & 3               & 88   & \textbf{100}  & \textbf{100}    & 20   & \textbf{45}  & \textbf{71}   & \textbf{31}   & \textbf{43}  & 53  & 49   & 76   & 94  & 66   &  80   & \textbf{84}   & 11    & 38   & 69   \\
& & 5               & 87   & \textbf{100}  & \textbf{100}    & 22   & 43   & \textbf{71}   & \textbf{31}   & 41   & 58 & 50   & \textbf{79}   & 93   & \textbf{67}   & 79   & 83   & 11   &\textbf{42}    & \textbf{71}   \\

\midrule
EquiDiff & RGB & 100                                    & 93 & \textbf{100}  & \textbf{100}  & \textbf{25} & 41.3 & 60   & 22   & 40   & \textbf{59}  & \textbf{55} & 77 & \textbf{96}   & 60   & 79 & 76   & \textbf{15} & 39 & 69 \\
DP-C     & RGB & 100                                    & 76   & 97   & \textbf{100}  & 8    & 19   & 46   & 17   & 35   &\textbf{59}   & 38   & 72   & 94   & 44   & 66   & 79   & 4    & 6    & 30   \\
DP-T     & RGB & 100                                    & 51   & 83   & 99    & 5    & 11   & 45   & 11   & 18   & 41   & 17   & 41   & 84  & 47   & 61   & 75   & 1    & 4    & 43   \\
DP3      & PCD & 10                                     & 69   & 87   & 99    & 7    & 6    & 19   & 12   & 23   & 40  & 7    & 23   & 65   & 34   & 45   & 69   & 0    & 1    & 3    \\
ACT      & RGB & 1                                   & 35   & 73   & 96    & 6    & 18   & 49   & 10   & 21   & 35    & 6    & 37   & 78 & 19   & 33   & 64   & 0    & 3    & 24   \\
\midrule[\heavyrulewidth] 
\multirow{2}{*}{Method}& \multirow{2}{*}{Obs} & \multirow{2}{*}{NFE} & \multicolumn{3}{c}{\textbf{Hammer Cln. D1}} & \multicolumn{3}{c}{\textbf{Mug Cln. D1}} & \multicolumn{3}{c}{\textbf{Kitchen D1}} & \multicolumn{3}{c}{\textbf{Pick Place D0}}  & \multicolumn{3}{c}{\textbf{Nut Asmn.D0}}& \multicolumn{3}{c}{\textbf{Coffee Pre. D1}} \\ 
\cmidrule(lr){4-6} \cmidrule(lr){7-9} \cmidrule(lr){10-12} \cmidrule(lr){13-15} \cmidrule(lr){16-18} \cmidrule(lr){19-21}
 &  & &{100} & {200} & {1000}& {100} & {200} & {1000}& {100} & {200} & {1000}& {100} & {200} & {1000}& {100} & {200} & {1000}& {100} & {200} & {1000}\\
\midrule
\multirow{3}{*}{Ours} & \multirow{3}{*}{RGB} & 1    & \textbf{75}   & \textbf{75}   & 77   & \textbf{50}   & 65   & 67   & 66   & 78   & 81  &37  & 50   & 67   & 59   & 83   & 94    & 75   & 74   & 70   \\

& & 3                                                    & 72  & \textbf{75}   & 84   & \textbf{50}   & 65   & \textbf{70}   & \textbf{73}   & 81   & 81  & 41 & 62 & 86   & 61   & 86   & 96  & 81   & 81   & \textbf{89}   \\
& & 5                                                    & 74   & \textbf{75}   & 83   & \textbf{50}  & \textbf{68}  & \textbf{70}   & \textbf{73}   & 81   & 83   & 41 & 66 & 87  & 62   & \textbf{87}   & \textbf{98}    & \textbf{83}   & 82   & 87   \\
\midrule
EquiDiff & RGB & 100                                    & 65 & 63 & 77 & \textbf{50} & 64   & 67 & 67   & 77 & 81 & \textbf{42}   & \textbf{74} & \textbf{92}   & \textbf{74}   & 85  & 94   & 77 & \textbf{83} & 85 \\
DP-C     & RGB & 100                                    & 52   & 59   & 73   & 43   & 59   & 65   & 67   & \textbf{85}   & 87  & 35   & 65   & 83 & 55   & 68   & 83      & 65   & 62   & 58   \\
DP-T     & RGB & 100                                    & 48   & 60   & 76   & 30   & 43   & 63   & 54   & 75   & 81   & 15   & 37   & 50  & 31   & 32   & 46    & 38   & 51   & 76   \\
DP3      & PCD & 10                                     & 54   & 71   & \textbf{87}   & 21   & 33   & 53   & 45   & 71   & \textbf{91}   & 12   & 15   & 34   & 16   & 24   & 58   & 10   & 22   & 63   \\
ACT      & RGB & 1                                    & 38   & 54   & 71   & 23   & 31   & 56   & 37   & 61   & 87   & 7    & 17   & 50  & 42   & 64   & 84    & 32   & 46   & 65   \\
\bottomrule
\end{tabular}
\end{center}
\caption{Comparison against SOTA. We report the success rates of 12 MimicGen~\citep{mandlekar2023mimicgen} tasks using 100, 200, and 1000 demonstrations, respectively. Results averaged over three seeds. The results of the baseline methods are directly cited from EquiDiff~\citep{wang2024equivariant}.} 
\label{tab:success_rates} 
\end{table*}

\section{Experiments}
\label{sec:experiments}

\subsection{Implementation Details}

We evaluate EfficientFlow on 12 tasks from the MimicGen benchmark~\citep{mandlekar2023mimicgen}.
These tasks span a wide range of difficulties, time horizons, and object arrangements, providing a comprehensive testbed for assessing policy performance across diverse robotic manipulation scenarios.

Notably, the agent view camera in MimicGen is not positioned orthogonally to the workspace but rather provides an agent-centric perspective. While the resulting image rotations may not perfectly align with the true object rotations, potentially impacting the performance of equivariant networks, this setup more closely mirrors real-world scenarios where state information is acquired from non-ideal viewpoints and thus offers a more rigorous test of policy generalization.

Our work aims for effective and efficient robot control in embodied AI.
Thus, we compare EfficientFlow against strong baselines in this field, including EquiDiff~\citep{wang2024equivariant}, ACT~\citep{zhao2023learning}, and two variants of Diffusion Policy~\citep{chi2023diffusion}: the CNN-based DP-C and the Transformer-based DP-T.
For fair comparison, all baseline methods use the same input as EfficientFlow: RGB images from both the agent-view and wrist-mounted cameras.
In addition to the RGB-based methods, we further compare with DP3~\citep{Ze2024DP3}, which utilizes 3D point cloud information. 
All policies employ absolute pose estimation for control. 
We evaluate EfficientFlow with 1, 3, and 5 NFE, while for baseline methods, we adhere to their original configurations.

\subsection{Quantitative Comparison}

\subsubsection{Sampling Efficiency}
EfficientFlow demonstrates notable advantages in inference speed and sampling efficiency. 
As shown in Table~\ref{tab:AverageSuccessRate}, 
it achieves an average inference time of only 12.22 ms under a 1-NFE setting, offering an approximately 56.1$\times$ speedup over EquiDiff.
Even with a more computationally demanding 5-NFE inference, EfficientFlow remains approximately 19.9$\times$ faster than EquiDiff on average.
This significant gain in efficiency is critical for real-time robotic control, where rapid response is essential, and enables inference frequencies of up to 81.8 Hz with single-step EfficientFlow.

\subsubsection{Data Efficiency}

As shown in Table~\ref{tab:success_rates}, 
under the data-limited setting of only 100 demonstrations, 
EfficientFlow not only enables significantly faster inference but also achieves a strong policy success rate.
Across the 12 test tasks, EfficientFlow outperforms EquiDiff in 7 of them. 
For the remaining 5 tasks, except the Nut Assembly D0 task, the performance gap between EfficientFlow and EquiDiff is within 5 percentage points. 
These results indicate that EfficientFlow can match or even surpass the performance of the SOTA baseline method while drastically reducing inference latency.
When trained with 200 demonstrations, EfficientFlow achieves 98.4\% of the success rate of the DP-C method trained with 1000 demonstrations, while surpassing the average success rates of DP-T, DP3, and ACT. 
This remarkable performance underscores its exceptional data efficiency and strong generalization under limited supervision.

\begin{table}[t]
  \centering
  \small
  \setlength{\tabcolsep}{2.5mm}
  \begin{tabular}{l c c c c c}
    \toprule
    \multirow{2}{*}{{Method}} 
      & \multirow{2}{*}{{NFE}} 
      & \multirow{2}{*}{{Runtime (ms)}} 
      & \multicolumn{3}{c}{\textbf{Average Success Rate (\%)}} \\
    \cmidrule(lr){4-6}
      & & & {100} & {200} & {1000} \\
    \midrule

    \multirow{3}{*}{Ours} 
        & 1 & 12.22 & 52.61 & 66.18 & 75.25 \\
        & 3 & 22.59 & 53.49 & \textbf{69.33} & \textbf{81.36} \\
        & 5 & 34.45 & \textbf{54.18} & \textbf{70.26} & \textbf{81.99} \\
    \midrule
    EquiDiff & 100 & 685.92 & 53.77 & 68.59 & 79.69 \\
    DP-C     & 100 & 542.96 & 42.00 & 57.75 & 71.42 \\
    DP-T     & 100 & 497.53 & 29.00 & 43.00 & 64.92 \\
    DP3      & 10  &  53.83 & 23.92 & 35.08 & 56.75 \\
    ACT      & 1   &  12.51 & 21.33 & 38.17 & 63.25 \\
    \bottomrule
  \end{tabular}
  
  \caption{Average success rates and inference time of EfficientFlow and baselines across 12 MimicGen tasks.}
  \label{tab:AverageSuccessRate}
\end{table}

\begin{table}[t]
  \centering  
    \small
  \begin{tabular}{l|cccccccccccc|c}
    \toprule
    Method & Sk  & Sq & Th & S3& Cf & 3P & Hm & Mu & Ki & PP  & Nu& CP & Avg\\
    \midrule
    Ours   & \textbf{20} & 60 & \textbf{20} & \textbf{40} & \textbf{30} & \textbf{40} & \textbf{10} & 30& \textbf{30} & \textbf{40} & 40 & \textbf{20} & \textbf{31.7} \\
    EquiDiff & 70 & 80 & 40 & 50 & 40 & 70 & 50 & 40 & \textbf{30} & 60 & 50 & 40 & 51.7\\
    NoAcc    & 30 & \textbf{40} & \textbf{20} & 60 & \textbf{30} & 60 & 20 & \textbf{10} & \textbf{30} & 50 & \textbf{30}& 40 & 35.0    \\ 
    \bottomrule
  \end{tabular}
    \caption{Minimum training epochs required for EfficientFlow and EquiDiff to reach 50\,\% of their final maximum success rate (MimicGen, 100 demonstrations, evaluated every 10 epochs with a fixed seed).}
     \label{tab:50PercentofMaxRate}
\end{table}

\begin{table}[t]
\centering
 \small

\begin{tabular}{l|cccccccccccc|c}
\toprule
Method & Sk  & Sq & Th & S3& Cf & 3P & Hm & Mu & Ki & PP  & Nu& CP & Avg\\
\midrule
Ours   & 94  & 21 & \textbf{31} & \textbf{48} & \textbf{65} & \textbf{11} & \textbf{75} & \textbf{50} & \textbf{66}  & \textbf{37} & \textbf{59} & \textbf{75} & \textbf{52.6} \\
NoAcc    & 88  & 16 & 24& 44 & 56 & 10 & 56 & 28 & 42  & 16.5& 28 & 62 & 39.3 \\
NonEqui  & 88  & 12 & 12 & 14 & 52 & 0 & 66 & 38 & 54  & 17.5& 43 & 56 & 37.7\\
EquiCFM  &88  &8 &22 &44& 60 &8 &54 & 36 &\textbf{66} &19  &40&40 &40.4\\
EquiMF  &\textbf{96}  &\textbf{22} &26 &34& 50 &8 & 58 &\textbf{50} &62  &26 &51 & 72 &46.3\\

\bottomrule
\end{tabular}
\caption{Ablation study on 12 MimicGen tasks using 100 demonstrations.}
\label{tab:AblationStudies}
\end{table}

\subsubsection{More Analysis of the Performance}
As shown in Table~\ref{tab:AverageSuccessRate}, the average success rates across all tasks further highlight the advantage of EfficientFlow.
First, the average performance of EfficientFlow exhibits an upward trend as the NFE increases; with sufficient data, multi-step inference can better capture the conditional action distribution to achieve higher success rates. 
More importantly, across all dataset sizes, EfficientFlow consistently exceeds the average success rate of EquiDiff~\citep{wang2024equivariant} while requiring dramatically fewer inference steps.
We attribute the advantage of EfficientFlow to two key design choices: the strong inductive biases introduced by the equivariant architecture, and the acceleration regularization that stabilizes the action sampling trajectories.
These factors enable the model to more efficiently learn key task structures and robust dynamics representations from limited demonstrations.

\subsubsection{Learning Efficiency}

To quantitatively evaluate the learning efficiency of EfficientFlow, we measure the minimum number of training epochs required to reach 50\% of the final peak success rate. 
As shown in Table~\ref{tab:50PercentofMaxRate}, both EfficientFlow and its variant without acceleration regularization (NoAcc) require substantially fewer training epochs than EquiDiff to reach 50\% of their maximum success rate. 
Notably, in the Hammer Cleanup D1 task, EfficientFlow requires only one-fifth of the epochs needed by EquiDiff. 
These results demonstrate the improved learning efficiency and stronger optimization dynamics of our equivariant flow-based framework.
The acceleration constraint further improves convergence speed, as evidenced by the faster learning of EfficientFlow compared to NoAcc.
These results indicate that EfficientFlow can extract essential policy information from demonstrations more rapidly, which reflects its superior learning dynamics, 
enabling it to reach target performance levels significantly faster than baseline methods.

\subsection{Ablation Study}
To disentangle the contributions of the equivariant architecture and the acceleration regularization, we conduct comprehensive ablation studies across 12 MimicGen tasks using 100 demonstrations in Table~\ref{tab:AblationStudies}. 
We evaluate four key variants:
1) NoAcc, which removes the acceleration term and is trained solely with the Conditional Flow Matching loss ($L_{\text{CFM}}$ in Eq.~\ref{eq:CFM});
2) NoEqui, which discards the equivariant architecture in favor of a non-equivariant backbone similar to Diffusion Policy~\citep{chi2023diffusion} while retaining the acceleration constraint;
3) EquiCFM, which combines the equivariant network with Consistency Flow Matching~\citep{yang2024consistency}; and
4) EquiMF, which integrates the equivariant network with MeanFlow~\citep{geng2025mean}.

The results reveal two clear trends. 
First, both our proposed components, including equivariance and acceleration regularization, substantially and independently improve performance. Removing either one leads to a consistent drop in success rate, demonstrating their complementary roles: the equivariant structure provides strong inductive biases for learning symmetric behaviors, while the acceleration term stabilizes sampling trajectory learning.

Second, to further investigate the benefit of our acceleration regularization, we compare EfficientFlow against other efficient one-step flow matching variants. 
By replacing our formulation with Consistency Flow Matching~\citep{yang2024consistency} and MeanFlow~\citep{geng2025mean} while keeping the equivariant architecture fixed, we observe that EfficientFlow achieves higher overall success rates across tasks. This suggests that our acceleration-regularized formulation not only stabilizes training but also leads to more accurate and robust policy generation.

\section{Conclusion}

We introduce EfficientFlow, a theory-grounded generative policy learning framework that effectively balances inference speed and data efficiency. By leveraging equivariant flow matching and acceleration regularization, this work provides a principled approach for learning robust visuomotor policies while ensuring strong generalization and efficient learning. 
The ultra-fast inference and strong data efficiency of EfficientFlow highlight its potential as a practical and high-performance solution for real-world embodied AI systems.

\section{Acknowledgments}
This research was supported by the National Natural Science Foundation of China (62302385).

\clearpage
\appendix

\section*{Appendix}
\setcounter{theorem}{0}
\label{appendix}

In this appendix, we first present the proof of Theorem~1 of the main paper in Section~\ref{sec:theorem1}, followed by the proof of the Flow Acceleration Upper Bound in Section~\ref{sec:fabo}. In Section~\ref{sec:err_term}, we analyze the error term generated by FABO.
We complement the main results in our paper with standard deviation in Section~\ref{sec:std}.
Additional details about the simulation environment and algorithm implementation are provided in Sections~\ref{sec:simulation} and~\ref{Implementation Details}, respectively.

\section{Proof of Theorem 1}\label{sec:theorem1}
\counterwithout{theorem}{section} 
\begin{theorem}\label{th:equi-appendix}
Let \( G \) be a transformation group acting on both the observation space and the action space. Suppose the initial distribution \( p_0 \) is isotropic, i.e., \( p_0(g x) = p_0(x) \) for all \( g \in G \), and the velocity network \( u_\theta(t, x | o) \) is equivariant, i.e., \(u_{\theta}(t,gx|go)=gu_{\theta}(t,x|o) \) for all \(g\in G\).
Then the induced conditional distribution at time \( t \), \(X_t|_{O=o} \), given by the flow ODE 
\(\frac{dx_t}{dt}=u_{\theta}(t,x_t|o),x_0\sim p_0\), satisfies
\begin{align}\label{eq:theorem-appendix}
    X_t \big|_{O = go} \stackrel{d}{=} g \left( X_t \big|_{O = o} \right), \quad t\in [0,1]
\end{align}
i.e., the output distribution is equivariant under the group action.
\end{theorem}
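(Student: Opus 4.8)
The plan is to reduce the distributional statement to a purely deterministic intertwining property of the flow map, and then transfer that property to measures using the invariance of the isotropic prior. Let $\phi_t^o$ denote the flow map of the ODE in \equref{eq:ode}, so that $\phi_t^o(x_0)=x_t$ is the solution starting from $x_0$ under observation $o$. Since the dynamics are deterministic and $x_0\sim p_0$, the conditional law of $X_t\mid_{O=o}$ is exactly the pushforward $(\phi_t^o)_* p_0$, where $g_*$ denotes pushforward under the linear group action $x\mapsto gx$. Under this identification, the target equality $X_t\mid_{O=go}\stackrel{d}{=} g(X_t\mid_{O=o})$ becomes $(\phi_t^{go})_* p_0 = g_*\,(\phi_t^o)_* p_0$.

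The central step is to establish the intertwining relation $\phi_t^{go}(g x_0)=g\,\phi_t^o(x_0)$ for every $x_0$ and every $t\in[0,1]$. First I would define the candidate curve $y_t := g\,\phi_t^o(x_0)$ and differentiate it. Because $G$ acts by a linear representation $\rho(g)$, the action commutes with the time derivative, giving $\frac{dy_t}{dt}=g\,u_\theta(t,\phi_t^o(x_0)\mid o)$. Applying the equivariance hypothesis \equref{eq:uequiv} rewrites the right-hand side as $u_\theta(t,g\,\phi_t^o(x_0)\mid go)=u_\theta(t,y_t\mid go)$, so $y_t$ solves the flow ODE for the rotated observation $go$ with initial value $y_0=g x_0$. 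By uniqueness of solutions to this initial value problem, $y_t$ must coincide with $\phi_t^{go}(g x_0)$, which is precisely the intertwining relation. This is the continuous-time, rigorous counterpart of the one-step discrete argument sketched in the main text.

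Finally I would transfer the intertwining relation to distributions using the isotropy of $p_0$. Isotropy $p_0(gx)=p_0(x)$ together with $\lvert\det\rho(g)\rvert=1$ for the orthogonal (rotation) action yields $g_* p_0 = p_0$, i.e. the prior is invariant. Writing the map $x\mapsto gx$ also as $g$, I then compute $(\phi_t^{go})_* p_0 = (\phi_t^{go})_*(g_* p_0) = (\phi_t^{go}\circ g)_* p_0 = (g\circ\phi_t^o)_* p_0 = g_*\,(\phi_t^o)_* p_0$, where the third equality uses the intertwining identity $\phi_t^{go}\circ g = g\circ\phi_t^o$. This is exactly the claimed equality of laws, and specializing to $t=1$ recovers the desired action-distribution equivariance in \equref{eq:desired property}.

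The main obstacle is making the intertwining step fully rigorous: it hinges on existence and uniqueness of solutions to the flow ODE, which requires regularity (e.g. Lipschitz continuity in $x$, uniformly in $t$) of the learned velocity field $u_\theta$. I would record this as a standing regularity assumption, satisfied by the smooth equivariant networks we use, and invoke the Picard--Lindel\"of theorem to guarantee a unique flow $\phi_t^o$. A secondary point to verify carefully is that the group action is linear and volume-preserving, so that it both commutes with differentiation and leaves the isotropic density invariant; both properties hold for the rotation representations $\rho_0$ and $\rho_1$ used in our construction.
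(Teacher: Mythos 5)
Your proposal is correct and follows essentially the same route as the paper's proof: establish the deterministic intertwining identity $\phi_t^{go}\circ g = g\circ\phi_t^o$ via ODE uniqueness applied to the curve $g\,\phi_t^o(x_0)$, then push the isotropic prior through the flow to obtain the distributional equivariance. Your pushforward phrasing and the explicit remarks on Lipschitz regularity and $\lvert\det\rho(g)\rvert=1$ are slightly more careful than the paper's random-variable presentation, but the argument is the same.
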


\begin{proof}
Let \(\Phi_t(x_0| o)\) be the solution of the ODE at time \(t\) with the initial value \(x_0\), conditioned on \(o\), so \(x_t=\Phi_t(x_0| o)\). We first show $\Phi_t$ is equivariant.
Let $\hat{x}_t = g\Phi_t(x_0| o)$. Its initial condition is $\hat{x}_0 = g\Phi_0(x_0| o) = gx_0$. Its dynamics are:
\begin{equation}
\frac{d}{dt}\hat{x}_t = g \frac{d}{dt}\Phi_t(x_0| o) = g u_{\theta}(t, \Phi_t|o) = u_{\theta}(t, g\Phi_t| go) = u_{\theta}(t, \hat{x}_t| go).    
\end{equation}
Since $\hat{x}_t$ and $\Phi_t(gx_0| go)$ share the same initial condition and ODE dynamics, by uniqueness~\citep{perko2013differential}, we have $g\Phi_t(x_0| o) = \Phi_t(gx_0| go)$.
Using this equivariance, we can establish the main result.
By definition and the equivariance property, we have the following identity for the random variables:
\begin{equation}
g(X_t|_{O=o}) = g\Phi_t(X_0| o) = \Phi_t(gX_0| go)
\label{ap:eq3}
\end{equation}
This first equality holds by definition of the flow, and the second is the random variable identity derived from the deterministic equivariance of the flow shown above. \\
Now, since the initial distribution \(p_0\) is isotropic, i.e., $X_0 \stackrel{d}{=} gX_0$, applying the same deterministic function $\Phi_t(\cdot| go)$ to both sides preserves the distributional equality. Therefore,
\begin{equation}
\Phi_t(gX_0| go) \stackrel{d}{=} \Phi_t(X_0| go)
\label{ap:eq4}
\end{equation}
Combining \eqref{ap:eq3} and \eqref{ap:eq4}, we have a chain of equalities:
\begin{equation}
g(X_t|_{O=o}) = \Phi_t(gX_0| go) \stackrel{d}{=} \Phi_t(X_0| go)
\end{equation}
By definition, the random variable $\Phi_t(X_0| go)$ is the same as $X_t|_{O=go}$. Thus, we can conclude:
\begin{equation}
g(X_t|_{O=o}) \stackrel{d}{=} X_t|_{O=go}    
\end{equation}
This completes the proof.
\end{proof}
\newpage

\section{Proof of Flow Acceleration Upper Bound (FABO)}\label{sec:fabo}
\begin{theorem}
Assume that $u(t,x)$ is twice continuously differentiable with bounded second derivatives. 
For any $x_t \sim p_t$, we are interested in two trajectories passing through it:
the optimal marginal trajectory denoted as $x_t$ and the linear conditional trajectory denoted as $\tilde{x}_t = (1-t)\tilde{x}_0+t \tilde{x}_1$. 
Then, when \(\Delta t\) is small enough,
\begin{equation}
\mathbb{E}\|u(t,x_t)-u(t+\Delta t,x_{t+\Delta t})\|_2^2\le \mathbb{E}\|u(t,\tilde{x}_t)-u(t+\Delta t,\tilde{x}_{t+\Delta t})\|_2^2 .
\label{ap:eq7}
\end{equation}
\end{theorem}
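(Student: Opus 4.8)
The plan is to Taylor-expand both sides of \eqref{ap:eq7} in $\Delta t$ and compare the leading $(\Delta t)^2$ coefficients, which are precisely the squared ``accelerations'' of $u$ along the two curves. Since both trajectories agree at time $t$, i.e.\ $\tilde{x}_t = x_t =: x$, the partial derivatives $\partial_t u(t,x)$ and $\nabla_x u(t,x)$ that appear are \emph{identical} for the two curves; only the transport velocity differs. Along the marginal trajectory the ODE $\dot{x}_s = u(s,x_s)$ gives $\tfrac{d}{ds}u(s,x_s)\big|_{s=t} = \partial_t u + (\nabla_x u)\,u(t,x)$, whereas along the linear conditional curve $\tilde{x}_s = x + (s-t)V$ with $V := \tilde{x}_1-\tilde{x}_0$ one gets $\tfrac{d}{ds}u(s,\tilde{x}_s)\big|_{s=t} = \partial_t u + (\nabla_x u)\,V$. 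Squaring the first-order expansions reduces \eqref{ap:eq7}, to leading order, to the pointwise comparison
\begin{equation}
\big\| \partial_t u + (\nabla_x u)\,u(t,x) \big\|_2^2 \;\le\; \mathbb{E}\big[\, \big\| \partial_t u + (\nabla_x u)\,V \big\|_2^2 \,\big|\, \tilde{x}_t = x \big].
\end{equation}

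The crux is the identity that the marginal velocity field equals the conditional expectation of the conditional velocity, $u(t,x) = \mathbb{E}[\,V \mid \tilde{x}_t = x\,]$, which is the defining property of the CFM regression target for the linear probability path. Writing $a = \partial_t u(t,x)$, $J = \nabla_x u(t,x)$, and $\bar{V} = \mathbb{E}[V\mid \tilde{x}_t=x] = u(t,x)$, a bias--variance (conditional Jensen) decomposition yields
\begin{equation}
\mathbb{E}\big[\|a + JV\|_2^2 \mid \tilde{x}_t = x\big] = \|a + J\bar{V}\|_2^2 + \mathbb{E}\big[\|J(V-\bar{V})\|_2^2 \mid \tilde{x}_t=x\big] \;\ge\; \|a + J\bar{V}\|_2^2,
\end{equation}
because the cross term vanishes by $\mathbb{E}[V-\bar V\mid \tilde x_t=x]=0$ and the residual is nonnegative. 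Taking the outer expectation over $x \sim p_t$ preserves the inequality, establishing the claim at leading order; the nonnegative gap is exactly $(\Delta t)^2\,\mathbb{E}\|J(V-\bar V)\|_2^2$, the conditional variance of the velocity filtered through the Jacobian.

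The main obstacle is the rigorous control of the higher-order remainder so that the leading-order inequality survives for every sufficiently small $\Delta t$. Invoking the hypothesis that $u$ is $C^2$ with bounded second derivatives, each first-order Taylor expansion carries an $O((\Delta t)^2)$ remainder; squaring and subtracting the leading term leaves an $O((\Delta t)^3)$ error whose coefficient involves moments of $V$ through the conditional curve's velocity, so I would use integrability of $V$ under $p_t$ to interchange expectation and expansion and to bound the remainder uniformly. Provided the leading gap $\mathbb{E}\|J(V-\bar V)\|_2^2$ is strictly positive, the $(\Delta t)^2$ term dominates the $O((\Delta t)^3)$ correction once $\Delta t$ is small, giving \eqref{ap:eq7}; in the degenerate case where the conditional velocity is a.s.\ deterministic given $\tilde x_t$, the two curves coincide to first order and the inequality holds with equality, so the bound $\le$ is never violated.
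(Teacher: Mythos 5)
Your proof is correct and follows essentially the same route as the paper's: both exploit $x_t=\tilde x_t$, Taylor-expand to first order in $\Delta t$, invoke the rectified-flow identity $u(t,x)=\mathbb{E}[\tilde x_1-\tilde x_0\mid \tilde x_t=x]$, and identify the gap between the two sides as the nonnegative conditional variance $\mathbb{E}\,\mathrm{tr}\,\mathrm{Var}[(\partial u/\partial x)(\tilde x_1-\tilde x_0)\mid\tilde x_t]\,(\Delta t)^2$ dominating the $o(\Delta t^2)$ remainder. Your bias--variance phrasing and the paper's trace-of-conditional-variance computation are the same decomposition, and your explicit remarks on the remainder and the degenerate zero-variance case only make explicit what the paper leaves implicit.
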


\begin{proof}
Since two trajectories intersect at the same state at time \(t\), we have $x_t=\tilde{x}_t$.
\begin{align}
&\mathbb{E}\|u(t,x_t)-u(t+\Delta t,x_{t+\Delta t})\|_2^2 \nonumber\\
=&\mathbb{E}\|u(t,\tilde{x}_t)-u(t,\tilde{x}_t)-\frac{\partial{u}}{\partial t}\Delta t-\frac{\partial{u}}{\partial x}u^{\text{opt}}(t,\tilde{x}_t)\Delta t+o(\Delta t)\|_2^2 & \text{(Taylor Expansion)}\nonumber\\
=&\mathbb{E}\|\frac{\partial{u}}{\partial t}\Delta t+\frac{\partial{u}}{\partial x}u^{\text{opt}}(t,\tilde{x}_t)\Delta t+o(\Delta t)\|_2^2  \nonumber \\
=&\mathbb{E}\|\frac{\partial{u}}{\partial t}\Delta t+\frac{\partial{u}}{\partial x}\mathbb{E}[\tilde{x}_1-\tilde{x}_0|\tilde{x}_t]\Delta t+o(\Delta t)\|_2^2 &\text{(Eq.2 of \citep{liu2022flow})} \nonumber\\
=&\mathbb{E}\Biggl[\|\frac{\partial{u}}{\partial t}\Delta t\|_2^2+2(\frac{\partial{u}}{\partial t})^T\frac{\partial{u}}{\partial x}\mathbb{E}[\tilde{x}_1-\tilde{x}_0|\tilde{x}_t]\Delta t^2+ \|\frac{\partial{u}}{\partial x}\mathbb{E}[\tilde{x}_1-\tilde{x}_0|\tilde{x}_t]\Delta t\|_2^2\Biggr]+o(\Delta t^2) \nonumber\\
=&\mathbb{E}\Biggl[\mathbb{E}\Biggl[\|\frac{\partial{u}}{\partial t}\Delta t\|_2^2+2(\frac{\partial{u}}{\partial t})^T\frac{\partial{u}}{\partial x}(\tilde{x}_1-\tilde{x}_0)\Delta t^2+ \|\frac{\partial{u}}{\partial x}\mathbb{E}[\tilde{x}_1-\tilde{x}_0|\tilde{x}_t]\Delta t\|_2^2\bigg| \tilde{x}_t\Biggr]\Biggr]+o(\Delta t^2) \nonumber\\
=&\mathbb{E}\Biggl[\|\frac{\partial{u}}{\partial t}\Delta t\|_2^2+2(\frac{\partial{u}}{\partial t})^T\frac{\partial{u}}{\partial x}(\tilde{x}_1-\tilde{x}_0)\Delta t^2+ \|\frac{\partial{u}}{\partial x}\mathbb{E}[\tilde{x}_1-\tilde{x}_0|\tilde{x}_t]\Delta t\|_2^2\Biggr]+o(\Delta t^2) &\text{(Total Expectation)} \nonumber\\
=&\mathbb{E}\Biggl[\|\frac{\partial{u}}{\partial t}\Delta t\|_2^2+2(\frac{\partial{u}}{\partial t})^T\frac{\partial{u}}{\partial x}(\tilde{x}_1-\tilde{x}_0)\Delta t^2+ \|\frac{\partial{u}}{\partial x}(\tilde{x}_1-\tilde{x}_0)\Delta t\|_2^2\Biggr]+o(\Delta t^2) \nonumber\\
&-\mathbb{E}\Biggl[\|\frac{\partial{u}}{\partial x}(\tilde{x}_1-\tilde{x}_0)\Delta t\|_2^2 -\|\frac{\partial{u}}{\partial x}\mathbb{E}[\tilde{x}_1-\tilde{x}_0|\tilde{x}_t]\Delta t\|_2^2\Biggr] \nonumber\\
=&\mathbb{E}\Biggl[\|\frac{\partial{u}}{\partial t}\Delta t\|_2^2+2(\frac{\partial{u}}{\partial t})^T\frac{\partial{u}}{\partial x}(\tilde{x}_1-\tilde{x}_0)\Delta t^2+ \|\frac{\partial{u}}{\partial x}(\tilde{x}_1-\tilde{x}_0)\Delta t\|_2^2\Biggr]+o(\Delta t^2) \nonumber\\
&-\mathbb{E}\Biggl[\mathrm{tr}\Biggl[\mathrm{Var}\Biggl[\frac{\partial{u}}{\partial x}(\tilde{x}_1-\tilde{x}_0)\Delta t\Bigg| \tilde{x}_t\Biggr]\Biggr]\Biggr] \nonumber\\
= &\mathbb{E}\|\frac{\partial{u}}{\partial t}\Delta t+\frac{\partial{u}}{\partial x}(\tilde{x}_1-\tilde{x}_0)\Delta t+o(\Delta t)\|_2^2 -\mathbb{E}\Biggl[\mathrm{tr}\Biggl[\mathrm{Var}\Biggl[\frac{\partial{u}}{\partial x}(\tilde{x}_1-\tilde{x}_0)\Delta t\Bigg| \tilde{x}_t\Biggr]\Biggr]\Biggr]\\ 
=&\mathbb{E}\|u(t,\tilde{x}_t)-u(t+\Delta t,\tilde{x}_{t+\Delta t})\|_2^2 + o(\Delta t^2)-\mathbb{E}\Biggl[\mathrm{tr}\Biggl[\mathrm{Var}\Biggl[\frac{\partial{u}}{\partial x}(\tilde{x}_1-\tilde{x}_0)\Bigg| \tilde{x}_t\Biggr]\Biggr]\Biggr](\Delta t^2) .
\end{align}
When \(\Delta t\) is small enough, we have
\begin{equation}
\mathbb{E}\|u(t,x_t)-u(t+\Delta t,x_{t+\Delta t})\|_2^2\le \mathbb{E}\|u(t,\tilde{x}_t)-u(t+\Delta t,\tilde{x}_{t+\Delta t})\|_2^2 
\end{equation}
\end{proof}
\newpage
\section{Analysis of the Error Term Generated by FABO}\label{sec:err_term}
\begin{lemma} \label{lem:trace_bound}
Let $A$ and $B$ be two $n \times n$ positive semidefinite (PSD) matrices ($A, B \succeq 0$). Let $\lambda_{\min}(B)$ and $\lambda_{\max}(B)$ denote the minimum and maximum eigenvalues of $B$, respectively. Then, the trace of their product is bounded as follows:
$$
\lambda_{\min}(B) \cdot \operatorname{tr}(A) \le \operatorname{tr}(AB) \le \lambda_{\max}(B) \cdot \operatorname{tr}(A)
$$
\end{lemma}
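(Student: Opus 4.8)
The plan is to reduce the matrix inequality to a scalar one via the spectral decomposition of $A$, using crucially that $A \succeq 0$ forces its eigenvalues to be nonnegative. First I would write $A = \sum_{i=1}^n a_i v_i v_i^\top$, where $\{v_i\}_{i=1}^n$ is an orthonormal eigenbasis of $A$ and each eigenvalue satisfies $a_i \ge 0$. By linearity and the cyclic invariance of the trace, this gives $\operatorname{tr}(AB) = \sum_{i=1}^n a_i \operatorname{tr}(v_i v_i^\top B) = \sum_{i=1}^n a_i\, v_i^\top B v_i$.

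The next step is to bound each quadratic form $v_i^\top B v_i$. Since every $v_i$ is a unit vector and $B \succeq 0$ has all eigenvalues lying in $[\lambda_{\min}(B), \lambda_{\max}(B)]$, the Rayleigh quotient characterization yields $\lambda_{\min}(B) \le v_i^\top B v_i \le \lambda_{\max}(B)$ for each $i$. Because each coefficient $a_i$ is nonnegative, I can multiply these two-sided bounds by $a_i$ without reversing them and sum over $i$, obtaining $\lambda_{\min}(B) \sum_i a_i \le \operatorname{tr}(AB) \le \lambda_{\max}(B) \sum_i a_i$. Identifying $\sum_i a_i = \operatorname{tr}(A)$ then closes the argument.

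The only place the hypotheses are genuinely needed — and hence the sole ``obstacle'' in an otherwise routine computation — is the nonnegativity of the coefficients $a_i$: this is exactly what lets me scale the Rayleigh bounds by $a_i$ while preserving their direction, and it is where $A \succeq 0$ (rather than merely symmetric) enters. PSD-ness of $B$ is used only to guarantee $\lambda_{\min}(B)$, $\lambda_{\max}(B)$ are the true extremes of the quadratic form; everything else is bookkeeping.

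As an equivalent coordinate-free route, I would instead sandwich $B$ in the Loewner order as $\lambda_{\min}(B)\, I \preceq B \preceq \lambda_{\max}(B)\, I$, conjugate by the PSD square root $A^{1/2}$ to get $\lambda_{\min}(B)\, A \preceq A^{1/2} B A^{1/2} \preceq \lambda_{\max}(B)\, A$, and finally take traces, using $\operatorname{tr}(A^{1/2} B A^{1/2}) = \operatorname{tr}(AB)$ together with monotonicity of the trace under the PSD order. This version makes transparent why both inequalities hold simultaneously and avoids the explicit eigen-sum.
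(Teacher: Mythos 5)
Your proof is correct and follows essentially the same route as the paper's: spectral decomposition of $A$ into $\sum_i a_i v_i v_i^\top$ with $a_i \ge 0$, reduction of $\operatorname{tr}(AB)$ to the sum $\sum_i a_i\, v_i^\top B v_i$, and the Rayleigh-quotient bounds $\lambda_{\min}(B) \le v_i^\top B v_i \le \lambda_{\max}(B)$. Your explicit remark that the nonnegativity of the $a_i$ is what preserves the inequality directions, and the alternative Loewner-order argument, are both sound additions but do not change the substance.
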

\begin{proof}
Let the spectral decomposition of $A \succeq 0$ be 
$$A=\lambda_1\xi_1\xi_1^T+\cdots+\lambda_n\xi_n\xi_n^T,\|\xi_i\|_2^2=1,\forall1\le i\le n$$
Then
$$\mathrm{tr}(AB)=\sum_{i=1}^{n}\lambda_i\mathrm{tr}(\xi_i\xi_i^TB)=\sum_{i=1}^{n}\lambda_i\frac{\xi_i^TB\xi_i}{\xi_i^T\xi_i}\in[\lambda_{\min}(B)\sum_{i=1}^{n}\lambda_i,\lambda_{\max}(B)\sum_{i=1}^{n}\lambda_i]$$
So
$$
\lambda_{\min}(B) \cdot \operatorname{tr}(A) \le \operatorname{tr}(AB) \le \lambda_{\max}(B) \cdot \operatorname{tr}(A)
$$
\end{proof}
\begin{theorem}
We assume that $\forall a\in \mathbb{R}^n,\|a\|_2^2=1$, $0<\mu_1\le\mathrm{Var}[a^T(\tilde{x}_1-\tilde{x}_0)|\tilde{x}_t]\le\mu_2$. This assumption, holding for all $\tilde{x}_t$, ensures the variance of any linear projection of the $\tilde{x}_1-\tilde{x}_0$ is uniformly bounded: it remains non-deterministic even given $\tilde{x}_t$ and finite. Then we have:
$$\mu_1\mathbb{E}\|\frac{\partial u}{\partial x}\|_F^2\le \mathbb{E}\Biggl[\mathrm{tr}\Biggl[\mathrm{Var}\Biggl[\frac{\partial{u}}{\partial x}(\tilde{x}_1-\tilde{x}_0)\Bigg| \tilde{x}_t\Biggr]\Biggr]\Biggr]\le\mu_2\mathbb{E}\|\frac{\partial u}{\partial x}\|_F^2$$

\end{theorem}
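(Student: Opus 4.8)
The plan is to reduce the claimed two-sided bound to a pointwise (conditional on $\tilde{x}_t$) matrix trace inequality and then invoke Lemma~\ref{lem:trace_bound}. First I would observe that the Jacobian $J := \frac{\partial u}{\partial x}$ is evaluated at $(t,\tilde{x}_t)$, so conditioning on $\tilde{x}_t$ renders $J$ a deterministic quantity. This lets me pull it outside the conditional variance: writing $v := \tilde{x}_1 - \tilde{x}_0$ and $\Sigma := \mathrm{Var}[v \mid \tilde{x}_t]$, the linear-transformation rule for covariance gives $\mathrm{Var}[Jv \mid \tilde{x}_t] = J\Sigma J^{\top}$.

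Next I would take the trace and use its cyclic invariance to rewrite $\mathrm{tr}(J\Sigma J^{\top}) = \mathrm{tr}(J^{\top}J\,\Sigma)$. Here $A := J^{\top}J$ is a Gram matrix, hence PSD, and $B := \Sigma$ is a conditional covariance matrix, hence also PSD, so the hypotheses of Lemma~\ref{lem:trace_bound} are satisfied. Applying the lemma with these $A,B$ and using $\mathrm{tr}(J^{\top}J) = \|J\|_F^2$ yields the pointwise sandwich $\lambda_{\min}(\Sigma)\|J\|_F^2 \le \mathrm{tr}(J^{\top}J\,\Sigma) \le \lambda_{\max}(\Sigma)\|J\|_F^2$.

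It then remains to translate the scalar variance assumption into eigenvalue bounds on $\Sigma$. Since $\mathrm{Var}[a^{\top}v \mid \tilde{x}_t] = a^{\top}\Sigma a$, the hypothesis $\mu_1 \le a^{\top}\Sigma a \le \mu_2$ for every unit vector $a$ is precisely the Rayleigh-quotient characterization of the extreme eigenvalues, giving $\mu_1 \le \lambda_{\min}(\Sigma)$ and $\lambda_{\max}(\Sigma) \le \mu_2$. Substituting these into the pointwise bound and taking the expectation over $\tilde{x}_t$, which preserves the inequalities by monotonicity, produces the stated result.

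I expect the only delicate point to be the first step: justifying that $J$ may be treated as constant under the conditioning so that the covariance factorizes cleanly as $J\Sigma J^{\top}$. Everything downstream is a mechanical application of Lemma~\ref{lem:trace_bound} together with the Rayleigh-quotient identity. For completeness I would also confirm that $\Sigma$ is genuinely PSD (immediate, as a covariance matrix) and that $\mathbb{E}\|J\|_F^2$ is finite so that the final expectation is well defined.
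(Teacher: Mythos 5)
Your proposal is correct and follows essentially the same route as the paper: factor the conditional covariance as $J\Sigma J^{\top}$, use cyclic invariance of the trace to obtain $\mathrm{tr}(J^{\top}J\,\Sigma)$, identify the hypothesis with the Rayleigh-quotient eigenvalue bounds on $\Sigma$, and apply Lemma~\ref{lem:trace_bound} before taking expectations. Your treatment is if anything slightly more explicit than the paper's about why the Jacobian can be pulled out of the conditional variance and how the scalar variance assumption yields the eigenvalue bounds.
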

\begin{proof}
\begin{align}
&\mathbb{E}\Biggl[\mathrm{tr}\Biggl[\mathrm{Var}\Biggl[\frac{\partial{u}}{\partial x}(\tilde{x}_1-\tilde{x}_0)\Bigg| \tilde{x}_t\Biggr]\Biggr]\Biggr]\\
=&\mathbb{E}\Biggl[\mathrm{tr}\Biggl[ \frac{\partial{u}}{\partial x}\text{Var}[\tilde{x}_1-\tilde{x}_0|\tilde{x}_t][\frac{\partial{u}}{\partial x}]^T\Biggr]\Biggr] \\
=&\mathbb{E}\Biggl[\mathrm{tr}\Biggl[ [\frac{\partial{u}}{\partial x}]^T \frac{\partial{u}}{\partial x}\text{Var}[\tilde{x}_1-\tilde{x}_0|\tilde{x}_t]\Biggr]\Biggr]
\end{align}
Under the assumption, $\text{Var}[\tilde{x}_1-\tilde{x}_0|\tilde{x}_t]$ is a positive definite matrix whose eigenvalues lie in the interval $[\mu_1,\mu_2]$. Meanwhile, $ [\frac{\partial{u}}{\partial x}]^T \frac{\partial{u}}{\partial x}$ is also positive semidefinite (PSD). Using Lemma \ref{lem:trace_bound},
$$\mathbb{E}\Biggl[\lambda_{\min}\mathrm{tr}\Biggl[ [\frac{\partial{u}}{\partial x}]^T \frac{\partial{u}}{\partial x}\Biggr]\Biggr]\le\mathbb{E}\Biggl[\mathrm{tr}\Biggl[ [\frac{\partial{u}}{\partial x}]^T \frac{\partial{u}}{\partial x}\text{Var}[\tilde{x}_1-\tilde{x}_0|\tilde{x}_t]\Biggr]\Biggr]\le\mathbb{E}\Biggl[\lambda_{\max}\mathrm{tr}\Biggl[ [\frac{\partial{u}}{\partial x}]^T \frac{\partial{u}}{\partial x}\Biggr]\Biggr]$$
Since 
$$\mathbb{E}\Biggl[\mathrm{tr}\Biggl[ [\frac{\partial{u}}{\partial x}]^T \frac{\partial{u}}{\partial x}\Biggr]\Biggr]=\mathbb{E}\|\frac{\partial u}{\partial x}\|_F^2$$
and
$$\mu_1\le \lambda_{\min}\le\lambda_{\max}\le\mu_2,$$
we have
$$\mu_1\mathbb{E}\|\frac{\partial u}{\partial x}\|_F^2\le \mathbb{E}\Biggl[\mathrm{tr}\Biggl[ [\frac{\partial{u}}{\partial x}]^T \frac{\partial{u}}{\partial x}\text{Var}[\tilde{x}_1-\tilde{x}_0|\tilde{x}_t]\Biggr]\Biggr]\le\mu_2\mathbb{E}\|\frac{\partial u}{\partial x}\|_F^2$$
\end{proof}
\section{Standard Deviation of Evaluation Results}\label{sec:std}
The main evaluation experiments are repeated with three random seeds.
Due to space limitations, only the average results are reported in the main paper, while the corresponding standard deviations are provided in Table~\ref{table1}.

\begin{table}[H]
\centering
\small 
\setlength{\tabcolsep}{4pt} 
\resizebox{1\linewidth}{!}{
\begin{tabular}{@{}llccccccccccccc@{}} 
\toprule
&  &  & \multicolumn{3}{c}{Stack D1} & \multicolumn{3}{c}{Stack Three D1} & \multicolumn{3}{c}{Threading D2} & \multicolumn{3}{c}{Square D2} \\
\cmidrule(lr){4-6} \cmidrule(lr){7-9} \cmidrule(lr){10-12} \cmidrule(lr){13-15}
Method &Obs & NFE& 100 & 200 & 1000 & 100 & 200 & 1000 & 100 & 200 & 1000 & 100 & 200 & 1000 \\
\midrule
\multirow{3}{*}{Ours} & \multirow{3}{*}{RGB} & 1 & 94.0$\pm$1.6 & 100.0$\pm$0.0 & 100.0$\pm$0.0 & 48.0$\pm$0.0 & 73.3$\pm$5.0 & 92.0$\pm$0.0 & 31.3$\pm$1.9 & 36.0$\pm$1.6 & 48.7$\pm$2.5 & 20.7$\pm$1.9 & 44.7$\pm$1.9 & 67.3$\pm$0.9 \\
& & 3 & 88.0$\pm$1.6 & 100.0$\pm$0.0 & 100.0$\pm$0.0 & 49.3$\pm$3.4 & 76.0$\pm$4.3 & 94.0$\pm$0.0 & 30.7$\pm$5.2 & 42.7$\pm$2.5 & 53.3$\pm$2.5 & 20.0$\pm$1.6 & 45.3$\pm$2.5 & 70.7$\pm$0.9 \\
& & 5 & 86.7$\pm$2.5 & 100.0$\pm$0.0 & 100.0$\pm$0.0 & 50.0$\pm$2.8 & 78.7$\pm$5.0 & 93.3$\pm$1.9 & 30.7$\pm$1.9 & 41.3$\pm$1.9 & 58.0$\pm$2.8 & 22.0$\pm$1.6 & 43.3$\pm$1.9 & 71.3$\pm$0.9 \\
\midrule
EquiDiff & RGB & 100 & 93.3$\pm$0.7 & 100.0$\pm$0.0 & 100.0$\pm$0.0 & 54.7$\pm$5.2 & 77.3$\pm$1.8 & 96.0$\pm$1.2 & 22.0$\pm$1.2 & 40.0$\pm$1.2 & 59.3$\pm$1.8 & 25.3$\pm$8.7 & 41.3$\pm$9.8 & 60.0$\pm$4.2 \\
DP-C & RGB & 100 & 76.0$\pm$4.0 & 97.3$\pm$0.7 & 100.0$\pm$0.0 & 38.0$\pm$0.0 & 72.0$\pm$2.0 & 94.0$\pm$1.2 & 17.3$\pm$1.8 & 35.3$\pm$1.3 & 58.7$\pm$0.7 & 8.0$\pm$1.2 & 19.3$\pm$5.3 & 46.0$\pm$7.2 \\
DP-T & RGB & 100 & 51.3$\pm$1.8 & 82.7$\pm$0.7 & 98.7$\pm$0.7 & 16.7$\pm$0.7 & 41.3$\pm$2.9 & 84.0$\pm$1.2 & 10.7$\pm$0.7 & 18.0$\pm$1.2 & 40.7$\pm$0.7 & 4.7$\pm$1.8 & 11.3$\pm$2.4 & 44.7$\pm$4.7 \\
DP3 & PCD & 10 & 69.3$\pm$3.7 & 86.7$\pm$4.7 & 99.3$\pm$0.7 & 7.3$\pm$0.7 & 22.7$\pm$3.7 & 65.3$\pm$1.8 & 12.0$\pm$3.1 & 23.3$\pm$3.3 & 40.0$\pm$2.0 & 6.7$\pm$0.7 & 6.0$\pm$0.0 & 19.3$\pm$3.3 \\
ACT & RGB & 1 & 34.7$\pm$0.7 & 72.7$\pm$7.7 & 96.0$\pm$1.2 & 6.0$\pm$2.3 & 36.7$\pm$2.7 & 78.0$\pm$1.2 & 10.0$\pm$1.2 & 20.7$\pm$2.9 & 35.3$\pm$2.4 & 6.0$\pm$0.0 & 18.0$\pm$1.2 & 49.3$\pm$4.7 \\
\toprule 
& & & \multicolumn{3}{c}{Coffee D2} & \multicolumn{3}{c}{Three Pc. Assembly D2} & \multicolumn{3}{c}{Hammer Cleanup D1} & \multicolumn{3}{c}{Mug Cleanup D1} \\
\cmidrule(lr){4-6} \cmidrule(lr){7-9} \cmidrule(lr){10-12} \cmidrule(lr){13-15}
Method &Obs & NFE&100 & 200 & 1000 & 100 & 200 & 1000 & 100 & 200 & 1000 & 100 & 200 & 1000 \\
\midrule
\multirow{3}{*}{Ours} & \multirow{3}{*}{RGB} & 1 & 64.7$\pm$5.2 & 80.7$\pm$0.9 & 78.7$\pm$0.9 & 10.7$\pm$2.5 & 35.3$\pm$4.1 & 60.0$\pm$1.6 & 74.7$\pm$1.9 & 75.3$\pm$3.4 & 77.3$\pm$7.5 & 50.0$\pm$1.6 & 64.7$\pm$1.9 & 66.7$\pm$0.9 \\
& & 3 & 66.0$\pm$0.0 & 80.0$\pm$1.6 & 84.0$\pm$1.6 & 10.7$\pm$1.9 & 38.0$\pm$4.3 & 69.3$\pm$3.4 & 72.0$\pm$4.3 & 75.3$\pm$3.4 & 84.0$\pm$2.8 & 50.0$\pm$4.3 & 64.7$\pm$0.9 & 70.0$\pm$1.6 \\
& & 5 & 67.3$\pm$1.9 & 79.3$\pm$1.9 & 82.7$\pm$0.9 & 10.7$\pm$0.9 & 42.0$\pm$3.3 & 70.7$\pm$2.5 & 74.0$\pm$4.3 & 74.7$\pm$3.8 & 82.7$\pm$5.7 & 50.0$\pm$3.3 & 68.0$\pm$1.6 & 70.0$\pm$2.8 \\
\midrule
EquiDiff & RGB & 100 & 60.0$\pm$2.0 & 79.3$\pm$1.3 & 76.0$\pm$2.0 & 15.3$\pm$1.8 & 39.3$\pm$1.8 & 69.3$\pm$3.5 & 65.3$\pm$0.7 & 63.3$\pm$4.4 & 76.7$\pm$0.7 & 49.3$\pm$0.7 & 64.0$\pm$1.2 & 66.7$\pm$0.7 \\
DP-C & RGB & 100 & 44.0$\pm$1.2 & 66.0$\pm$2.3 & 78.7$\pm$0.7 & 4.0$\pm$0.0 & 6.0$\pm$1.2 & 30.0$\pm$1.2 & 52.0$\pm$1.2 & 58.7$\pm$1.3 & 73.3$\pm$2.4 & 42.7$\pm$0.7 & 58.7$\pm$1.3 & 65.3$\pm$2.4 \\
DP-T & RGB & 100 & 47.3$\pm$0.7 & 60.7$\pm$1.8 & 74.7$\pm$2.7 & 0.7$\pm$0.7 & 4.0$\pm$0.0 & 42.7$\pm$1.3 & 48.0$\pm$1.2 & 60.0$\pm$1.2 & 76.0$\pm$1.2 & 30.0$\pm$1.2 & 42.7$\pm$2.9 & 63.3$\pm$0.7 \\
DP3 & PCD & 10 & 34.0$\pm$4.0 & 45.3$\pm$4.1 & 68.7$\pm$2.4 & 0.0$\pm$0.0 & 0.7$\pm$0.7 & 3.3$\pm$0.7 & 54.0$\pm$3.1 & 70.7$\pm$4.1 & 86.7$\pm$0.7 & 21.3$\pm$2.7 & 32.7$\pm$1.8 & 52.7$\pm$4.4 \\
ACT & RGB & 1 & 19.3$\pm$2.4 & 33.3$\pm$2.4 & 64.0$\pm$2.3 & 0.0$\pm$0.0 & 3.3$\pm$0.7 & 24.0$\pm$3.1 & 38.0$\pm$4.2 & 54.0$\pm$1.2 & 70.7$\pm$1.3 & 23.3$\pm$0.7 & 31.3$\pm$1.3 & 56.0$\pm$2.0 \\
\toprule
&& & \multicolumn{3}{c}{Kitchen D1} & \multicolumn{3}{c}{Pick Place D0} & \multicolumn{3}{c}{Nut Assembly D0} & \multicolumn{3}{c}{Coffee Preparation D1} \\
\cmidrule(lr){4-6} \cmidrule(lr){7-9} \cmidrule(lr){10-12} \cmidrule(lr){13-15}
Method &Obs & NFE& 100 & 200 & 1000 & 100 & 200 & 1000 & 100 & 200 & 1000 & 100 & 200 & 1000 \\
\midrule
\multirow{3}{*}{Ours} & \multirow{3}{*}{RGB} & 1 & 65.7$\pm$2.1 & 78.0$\pm$2.8 & 80.7$\pm$4.1 & 37.3$\pm$2.8 & 49.5$\pm$3.1 & 67.3$\pm$0.2 & 59.0$\pm$1.4 & 82.7$\pm$2.6 & 94.3$\pm$0.5 & 75.3$\pm$1.9 & 74.0$\pm$3.3 & 70.0$\pm$2.8 \\
& & 3 & 72.7$\pm$1.9 & 80.7$\pm$0.9 & 80.7$\pm$0.9 & 40.8$\pm$2.7 & 61.7$\pm$3.3 & 85.7$\pm$1.6 & 61.0$\pm$1.6 & 86.3$\pm$0.5 & 96.0$\pm$0.0 & 80.7$\pm$0.9 & 81.3$\pm$2.5 & 88.7$\pm$1.9 \\
& & 5 & 72.7$\pm$1.9 & 81.3$\pm$1.9 & 83.3$\pm$2.5 & 41.2$\pm$0.5 & 65.5$\pm$2.1 & 86.8$\pm$3.1 & 62.3$\pm$1.7 & 87.0$\pm$0.8 & 97.7$\pm$1.7 & 82.7$\pm$1.9 & 82.0$\pm$1.6 & 87.3$\pm$2.5 \\
\midrule
EquiDiff & RGB & 100 & 67.3$\pm$0.7 & 76.7$\pm$3.3 & 81.3$\pm$0.7 & 41.7$\pm$3.2 & 74.2$\pm$3.2 & 92.0$\pm$1.2 & 74.0$\pm$1.2 & 85.0$\pm$1.5 & 93.7$\pm$0.9 & 76.7$\pm$0.7 & 82.7$\pm$0.7 & 85.3$\pm$0.7 \\
DP-C & RGB & 100 & 66.7$\pm$2.4 & 84.7$\pm$0.7 & 86.7$\pm$1.8 & 35.3$\pm$2.2 & 65.0$\pm$2.8 & 82.7$\pm$0.6 & 54.7$\pm$2.3 & 68.0$\pm$2.6 & 83.0$\pm$1.5 & 65.3$\pm$0.7 & 62.0$\pm$4.2 & 58.0$\pm$3.1 \\
DP-T & RGB & 100 & 54.0$\pm$2.3 & 75.3$\pm$0.7 & 81.3$\pm$2.4 & 14.7$\pm$1.5 & 36.5$\pm$1.3 & 50.0$\pm$6.0 & 30.7$\pm$5.0 & 32.3$\pm$5.2 & 45.7$\pm$5.9 & 38.0$\pm$2.0 & 51.3$\pm$1.8 & 76.0$\pm$6.0 \\
DP3 & PCD & 10 & 44.7$\pm$1.8 & 71.3$\pm$2.4 & 91.3$\pm$2.4 & 11.7$\pm$0.9 & 15.0$\pm$1.7 & 34.0$\pm$0.0 & 15.7$\pm$1.3 & 23.7$\pm$3.4 & 57.7$\pm$1.9 & 10.0$\pm$2.3 & 22.0$\pm$5.3 & 63.3$\pm$4.1 \\
ACT & RGB & 1 & 37.3$\pm$3.5 & 60.7$\pm$3.5 & 87.3$\pm$3.5 & 7.2$\pm$0.9 & 17.2$\pm$1.1 & 50.0$\pm$2.9 & 42.3$\pm$2.9 & 63.7$\pm$3.5 & 84.3$\pm$0.9 & 32.0$\pm$2.0 & 46.0$\pm$3.1 & 64.7$\pm$2.4 \\
\bottomrule
\end{tabular}}
\caption{The performance of our EfficientFlow compared with the baselines in MimicGen. We experiment with 100, 200, and 1000 demos in each environment and report the maximum
task success rate among 50 evaluations throughout training. Results averaged over three seeds. $\pm$ indicates standard deviation.}
\label{table1}
\end{table}

\section{Simulation Environments}\label{sec:simulation}

Figure~\ref{image-Mimicgen} presents agent-view observations from the 12 manipulation tasks within the MimicGen~\citep{mandlekar2023mimicgen} simulation environment. As illustrated, these tasks vary significantly in complexity and the number of objects involved. For clarity in our analysis, these tasks can be broadly categorized as follows:

\begin{enumerate}

\item Basic Tasks (Stack, Stack Three): This category comprises a set of box stacking tasks primarily designed to evaluate the fundamental precision of the robot's motion control.
\item Contact-Rich Tasks (Square, Threading, Coffee, Three Piece Assembly, Hammer Cleanup, Mug Cleanup): This group includes tasks that necessitate behaviors with substantial physical contact, such as insertions or drawer articulations. These tasks assess the robot's capability for fine-grained manipulation and its adaptability to uncertainties arising from physical interactions.  
\item Long-Horizon Tasks (Nut Assembly, Kitchen, Pick Place, Coffee Preparation): These tasks require the sequential execution of multiple distinct behaviors, thereby testing the stability of the robot's long-duration movements and its comprehensive ability to perform error recovery when necessary.
\end{enumerate}

In our experiments, both agent-view and eye-in-hand image observations were captured at a resolution of $84\times84$ pixels with 3 color channels (RGB). Point cloud observations consisted of 1024 points, with each point represented by 6 features (XYZ coordinates and RGB color).

\label{Appendix-SimulationEnvironments}
\begin{figure}[htbp]
  \centering   
  \includegraphics[width=0.8\textwidth]{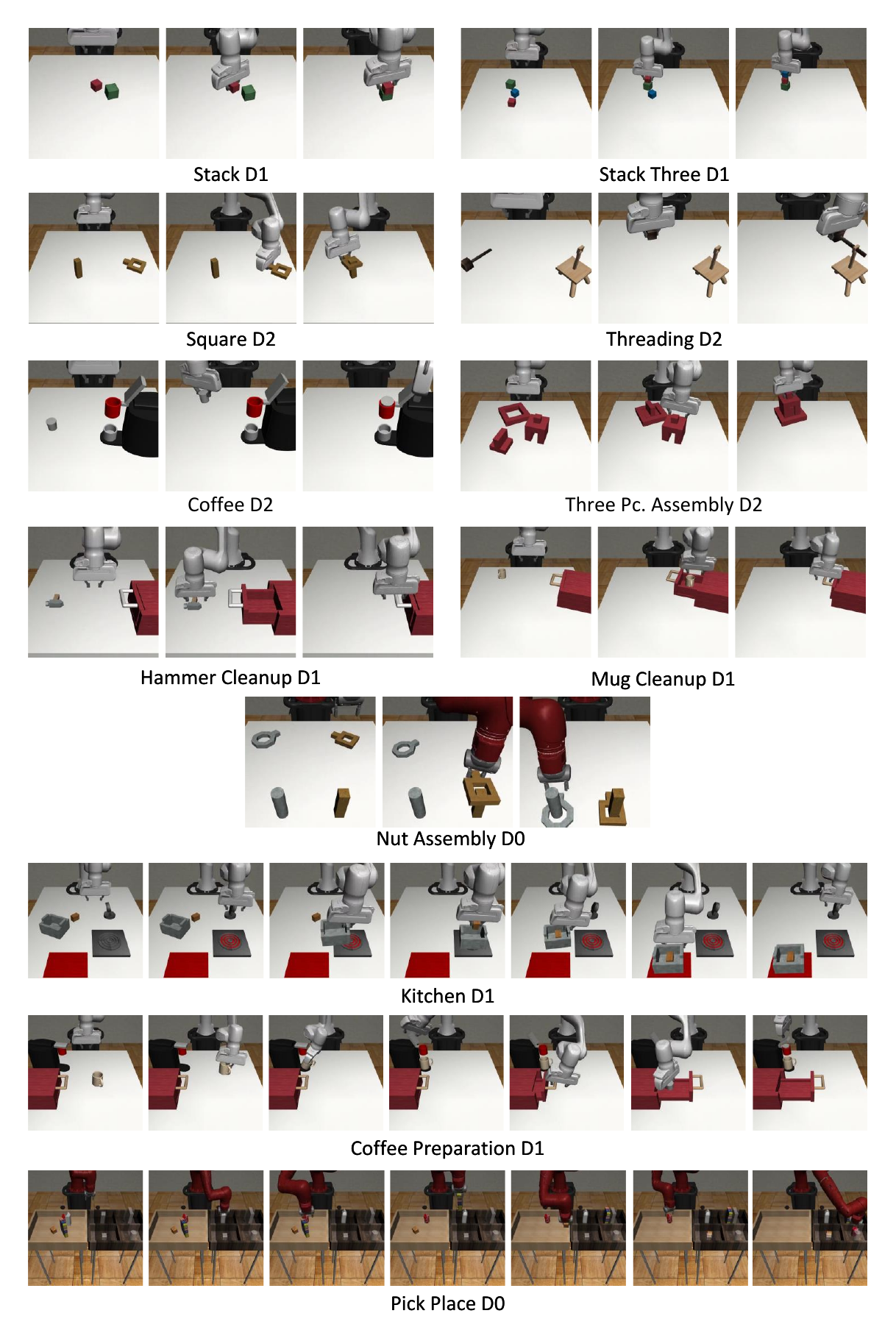}
  \vspace{-10pt}
  \caption{Environment diagrams depicting the MimicGen~\citep{mandlekar2023mimicgen} simulation experiments. The image sequences for each task, presented from left to right, illustrate the progression from the initial state to the final completion of the respective tasks.}
  \label{image-Mimicgen}

\end{figure}
\begin{figure}[htbp]
  \centering    
  \includegraphics[width=0.8\textwidth]{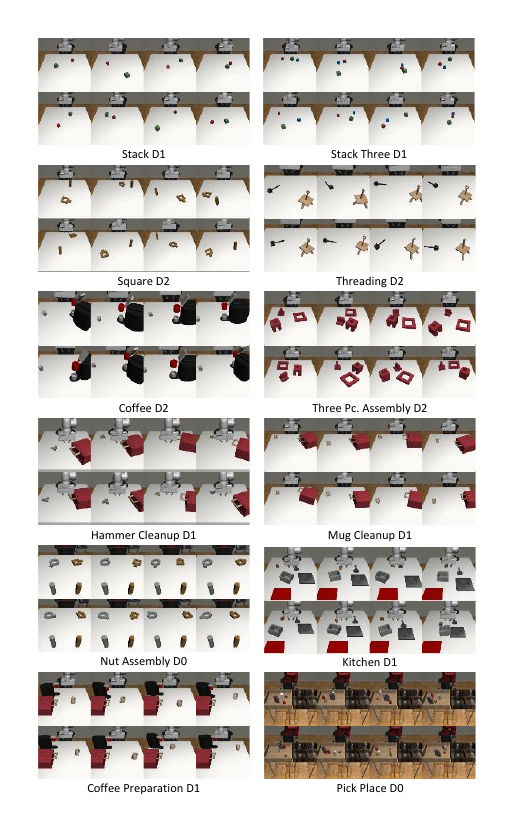}
  \vspace{-30pt}
  
  \caption{The reset distributions for each task in MimicGen~\citep{mandlekar2023mimicgen} simulation experiments.}
  \label{image-MimicgenReset}

\end{figure}

\section{Implementation Details}
\label{Implementation Details}

\subsection{Network Architecture}
For the network architecture, an equivariant ResNet-18 is employed to encode the agent-view images, yielding an output dimensionality of $256 \times 8$. Concurrently, images from the hand camera are processed by a standard non-equivariant ResNet-18, resulting in a 256-dimensional feature vector. These visual features, in conjunction with proprioceptive robot state information, are subsequently fused and compressed via an equivariant layer, producing a combined embedding of $256 \times 8$ dimensions. 

Following this, a time step $t \in [0,1]$ is randomly initialized, along with an initial noise action sequence $x_0$ sampled from a prior distribution. The intermediate action state $x_t$ at time step $t$ is obtained through linear interpolation between $x_0$ and the target action sequence $x_1$. This $x_t$ is then encoded by an equivariant action encoder into a $64 \times 8$ dimensional action embedding. 

The aforementioned embeddings serve as conditioning inputs for a 1D-UNet. This network, featuring hidden layer dimensions of $[512, 1024, 2048]$, predicts a $64 \times 8$ dimensional vector. Finally, this vector is equivariantly decoded to generate the velocity prediction $u_\theta$. The terminal action trajectory is then computed using the Euler method.


\subsection{Training Details}

We train our models with the AdamW~\citep{loshchilov2017decoupled} optimizer with a learning rate of $10^{-4}$ and weight decay of $10^{-6}$ (the learning rate in Coffee Preparation D1, Pick Place D0, and Hammer Cleanup D1 tasks is 0.001). We use a cosine learning rate scheduler with 500 warm-up steps. We conducted training on two types of graphics cards, 4090 and A100. The batch size we used is 80. For different tasks in MimicGen, the training on 4090 requires 23 to 82 hours, respectively. During training, the model receives the two most recent historical observations at each step. A single model output consists of an action sequence spanning 16 time steps, of which the $[1,8]$ steps are executed. The total number of training steps was kept consistent across experiments with varying numbers of demonstrations.
For each different number of demos (100, 200, 1000), we maintain roughly the same number of training steps by training for 50000/n epochs, where n is the number of demos. Evaluations are conducted every 1000/n epochs (50 evaluations in total).

For baselines~\citep{wang2024equivariant, chi2023diffusion, Ze2024DP3}, we adopted the hyperparameter configurations reported in their original publication, except that we use the same action sequence
length (16 for training and 8 for evaluation) in DP3~\citep{Ze2024DP3} as~\citep{wang2024equivariant,chi2023diffusion} and our method. For the ACT~\citep{zhao2023learning}, we follow the hyperparameters provided in the prior work, except that we use a chunk size of 10, a KL weight of 10, a batch size of 64 with a learning rate of $5\times10^{-5}$, and no temporal aggregation, following the tuning tips provided by the authors.

\subsection{Evaluation Strategy}
During the evaluation phase, the model similarly processes the two most recent observations. To enhance temporal consistency, five independent initial noise action sequences are randomly sampled and processed in parallel by the network. These sequences undergo an iterative inference process for 1, 3, or 5 steps (NFE=1, 3, or 5), resulting in five distinct candidate action trajectories.

To ensure smooth action transitions, the Euclidean distance is computed between each newly generated candidate trajectory and the terminal segment of the previously predicted trajectory. Specifically, this involves comparing the last 7 steps of the previous trajectory with the initial 1-8 steps of the current candidate trajectory. The candidate trajectory exhibiting the smallest Euclidean distance is then selected for execution. Furthermore, to encourage exploration, approximately every 10 predictions, a trajectory is chosen randomly from the candidates instead of always selecting the one with the smoothest transition.

\section{Additional Analysis}
\subsection{Hyperparameter Sensitivity Analysis for \texorpdfstring{$\lambda$}{lambda}}

The FABO module is critical for the model's performance. Given that its influence is modulated by the hyperparameter $\lambda$, we performed a sensitivity analysis on the formulation of $\lambda$ for the Mug Cleanup D1 task, which is particularly sensitive to this component. The formulation adopted in this work is $\lambda = (1-t)^2$. We tested several alternative formulations to validate this choice in Table~\ref{tab:lambda_sensitivity}. 

The result reveals two key insights. First, the time-varying characteristic is essential, as replacing it with a constant schedule degrades the success rate from 50\% to 42.0\%. Second, the model exhibits significant robustness to the scale of this formulation: multiplying the original schedule by factors of 0.5, 1, or 2 yields comparable results. This insensitivity suggests that precise calibration of the magnitude is unnecessary, effectively easing the hyperparameter tuning overhead.

\begin{table}[htbp]
\centering
\begin{tabular}{lc}
\toprule
\textbf{Hyperparameter Setting for $\lambda$} & \textbf{Mean Success Rate (\%)} \\
\midrule
\multicolumn{2}{l}{\textit{Time-Varying Formulations}} \\
\quad $0.5(1-t)^2$       & $48.0 \pm 1.6$ \\
\quad $(1-t)^2$         & $50.0 \pm 1.6$ \\
\quad $2(1-t)^2$        & $51.3 \pm 3.8$ \\
\midrule
\multicolumn{2}{l}{\textit{Constant Formulations}} \\
\quad $0.5$             & $42.0 \pm 2.8$ \\
\bottomrule
\end{tabular}
\caption{Sensitivity analysis of the hyperparameter $\lambda$ on the Mug Cleanup D1 task. The maximum task success rate among 50 evaluations throughout training and the standard deviations are reported.}
\label{tab:lambda_sensitivity}
\end{table}

\subsection{Analysis of Trajectory Quality}
To quantify trajectory smoothness, we measured the rate of change in velocity at 500 sampled timesteps on the Stack D1 task. EfficientFlow exhibits a mean velocity change of 0.103 (std: 0.088), representing a significant reduction of 24.3\% compared to the NoAcc baseline (mean: 0.136, std: 0.133).

\section{Multi-Modal Extensions and Generalization}
\subsection{Multi-Modal Performance in MimicGen}
To further investigate the adaptability and potential of our core architecture, we extend EfficientFlow to incorporate 3D geometric information through a voxel-based representation.

We implemented a voxel-based variant of EfficientFlow and compared it against our original RGB-based model from the main paper. For context, we also include results from a strong point-cloud-based method, Flowpolicy~\citep{zhang2025flowpolicy}, and its baseline, DP3~\citep{Ze2024DP3}. The evaluation was conducted in the MimicGen environment on five tasks (Stack D1, Threading D2, Square D2, Stack Three D1, and Three Pc. Asse. D2), with each model trained on 100 demonstrations. We report the mean of maximum success rates over three random seeds in Table~\ref{table2}.

The Voxel-based EfficientFlow achieves superior performance by leveraging richer spatial perception and explicit 3D geometry. This demonstrates that our strategy effectively generalizes across different input modalities. However, the acquisition overhead of 3D data—ranging from sensor cost to real-time processing—poses a barrier to real-world deployment. Consequently, while Voxels offer peak performance, our RGB variant remains a vital solution for scenarios where simplicity and hardware accessibility are prioritized.

\newcolumntype{C}[1]{>{\centering\arraybackslash}p{#1}}
\begin{table}[ht!]
\centering
\begin{tabular}{l *{6}{C{1.9cm}}}
\toprule
\textbf{Method} & \textbf{Stack} & \textbf{Threading} & \textbf{Square} & \textbf{Stack 3} & \textbf{3Pc. Asm.} & \textbf{Average} \\
\midrule
DP3             & 69.3$\pm$3.7      & 12.0$\pm$3.1          & 6.7$\pm$0.7        & 7.3$\pm$0.7             & 0.0$\pm$0.0             & 19.1 \\
FlowPolicy      & 72.0$\pm$7.1      & 13.3$\pm$1.9          & 6.0$\pm$1.6        & 10.0$\pm$1.6            & 0.0$\pm$0.0             & 20.3 \\

\midrule
Ours(RGB)       & 94.0$\pm$1.6      & 31.3$\pm$1.9          & 20.7$\pm$1.9       & 48.0$\pm$0.0            & 10.7$\pm$2.5            & 41.0 \\
Ours(Voxel)     & 93.3$\pm$0.9      & 41.3$\pm$0.9          & 33.3$\pm$0.9       & 67.3$\pm$6.2            & 20.0$\pm$3.3            & 51.0 \\
\bottomrule
\end{tabular}
\caption{Multi-modal performance comparison in the MimicGen environment. We report the mean success rates (\%) and standard deviations over three random seeds. Our EfficientFlow framework, in both RGB and Voxel configurations, shows superior performance.}
\label{table2}
\end{table}

\subsection{Robomimic Experiment}
To further validate the generalization capabilities of our method beyond the training environment, we conducted experiments on the Robomimic~\citep{robomimic2021} benchmark. We trained EfficientFlow and Diffusion Policy~\citep{chi2023diffusion} using only 20 expert demonstrations for four proficient-human (ph) single-arm tasks. Other hyperparameters mirror those used in our MimicGen experiment.

Due to the limited randomness in the initial state distributions of these tasks, the data efficiency gains stemming from equivariance are less pronounced compared to the MimicGen experiments.
Nevertheless, EfficientFlow consistently outperforms the baseline across the majority of tasks, securing a superior average success rate.
\begin{table}[htb!]
\centering
\begin{tabular}{l *{5}{C{2.cm}}}
\toprule
\textbf{Method} & {\textbf{Tool hang}} & {\textbf{Can}} & {\textbf{Lift}} & {\textbf{Square}} & \textbf{Average} \\
\midrule
DP-C & 15.3 $\pm$ 2.5 & 67.3 $\pm$ 5.0 & 100.0 $\pm$ 0.0 & 42.7 $\pm$ 4.7  &56.3 \\
Ours & 16.7 $\pm$ 5.7 & 90.7 $\pm$ 0.9 & 100.0 $\pm$ 0.0 & 44.0 $\pm$ 5.9 &62.9 \\
\bottomrule
\end{tabular}
\caption{Performance comparison on Robomimic tasks. We report the maximum
task success rate among 50 evaluations throughout training and standard deviations over three random seeds.}
\label{tab:robomimic_tasks}
\end{table}

\clearpage
\bibliographystyle{iclr2025_conference}
\bibliography{main}

\end{document}